%% file: iclr2026_conference.tex
\documentclass{article} 
\usepackage{iclr2026_conference,times}

\input{math_commands.tex}

\usepackage{bbding}
\usepackage{hyperref}
\usepackage{url}
\usepackage{booktabs}
\usepackage{multirow}
\newtheorem{theorem}{Theorem}

\newtheorem{proof}{Proof}

\newtheorem{corollary}{Corollary}
\usepackage{multirow}
\usepackage{makecell}
\usepackage{graphicx}
\usepackage[table]{xcolor}
\usepackage{colortbl}
\usepackage{caption}
\usepackage{float}
\definecolor{MyBlue}{RGB}{227,241,245}

\title{Flow Caching for Autoregressive Video Generation}

\author{
  {\bf Yuexiao Ma}\textsuperscript{1,2,\dag, \ddag}, \enspace
  {\bf Xuzhe Zheng}\textsuperscript{1,\dag}, \enspace
  {\bf Jing Xu}\textsuperscript{1,\dag}, \enspace
  {\bf Xiwei Xu}\textsuperscript{1}, \enspace
  {\bf Feng Ling}\textsuperscript{2}, \enspace
  {\bf Xiawu Zheng}\textsuperscript{1}, \\
  {\bf Huafeng Kuang}\textsuperscript{2}, \enspace
  {\bf Huixia Li}\textsuperscript{2}, \enspace
  {\bf Xing Wang}\textsuperscript{2}, \enspace
  {\bf Xuefeng Xiao}\textsuperscript{2}, \enspace
  {\bf Fei Chao}\textsuperscript{1}, \enspace
  {\bf Rongrong Ji}\textsuperscript{1*} \\
  \textsuperscript{1}Key Laboratory of Multimedia Trusted Perception and Efficient Computing,\\
  Ministry of Education of China, Xiamen University, 361005, P.R. China. \\
  \textsuperscript{2}ByteDance \\
}

\iclrfinalcopy 
\begin{document}

\maketitle 
\renewcommand{\thefootnote}{*} 
\footnotetext{Corresponding author: rrji@xmu.edu.cn, \dag: Equal contribution, \ddag: This work was done when Yuexiao Ma was intern at ByteDance.}
\renewcommand{\thefootnote}{\arabic{footnote}} 
\vspace{-2em}
\begin{abstract}
Autoregressive models, often built on Transformer architectures, represent a powerful paradigm for generating ultra-long videos by synthesizing content in sequential chunks. However, this sequential generation process is notoriously slow. While caching strategies have proven effective for accelerating traditional video diffusion models, existing methods assume uniform denoising across all frames—an assumption that breaks down in autoregressive models where different video chunks exhibit varying similarity patterns at identical timesteps.
In this paper, we present \textbf{FlowCache}, the first caching framework specifically designed for autoregressive video generation. Our key insight is that each video chunk should maintain independent caching policies, allowing fine-grained control over which chunks require recomputation at each timestep. We introduce a chunkwise caching strategy that dynamically adapts to the unique denoising characteristics of each chunk, complemented by a joint importance–redundancy optimized KV cache compression mechanism that maintains fixed memory bounds while preserving generation quality.
Our method achieves remarkable speedups of $\textbf{2.38}\times$ on MAGI-1 and $\textbf{6.7}\times$ on SkyReels-V2, with negligible quality degradation (VBench: $0.87\uparrow$ and $0.79\downarrow$ respectively). These results demonstrate that FlowCache, successfully unlocks the potential of autoregressive models for real-time, ultra-long video generation—establishing a new benchmark for efficient video synthesis at scale. The code is available at https://github.com/mikeallen39/FlowCache.
\end{abstract}

\vspace{-1em}
\noindent\begin{minipage}{\linewidth}
\centering
\includegraphics[width=1.0\linewidth]{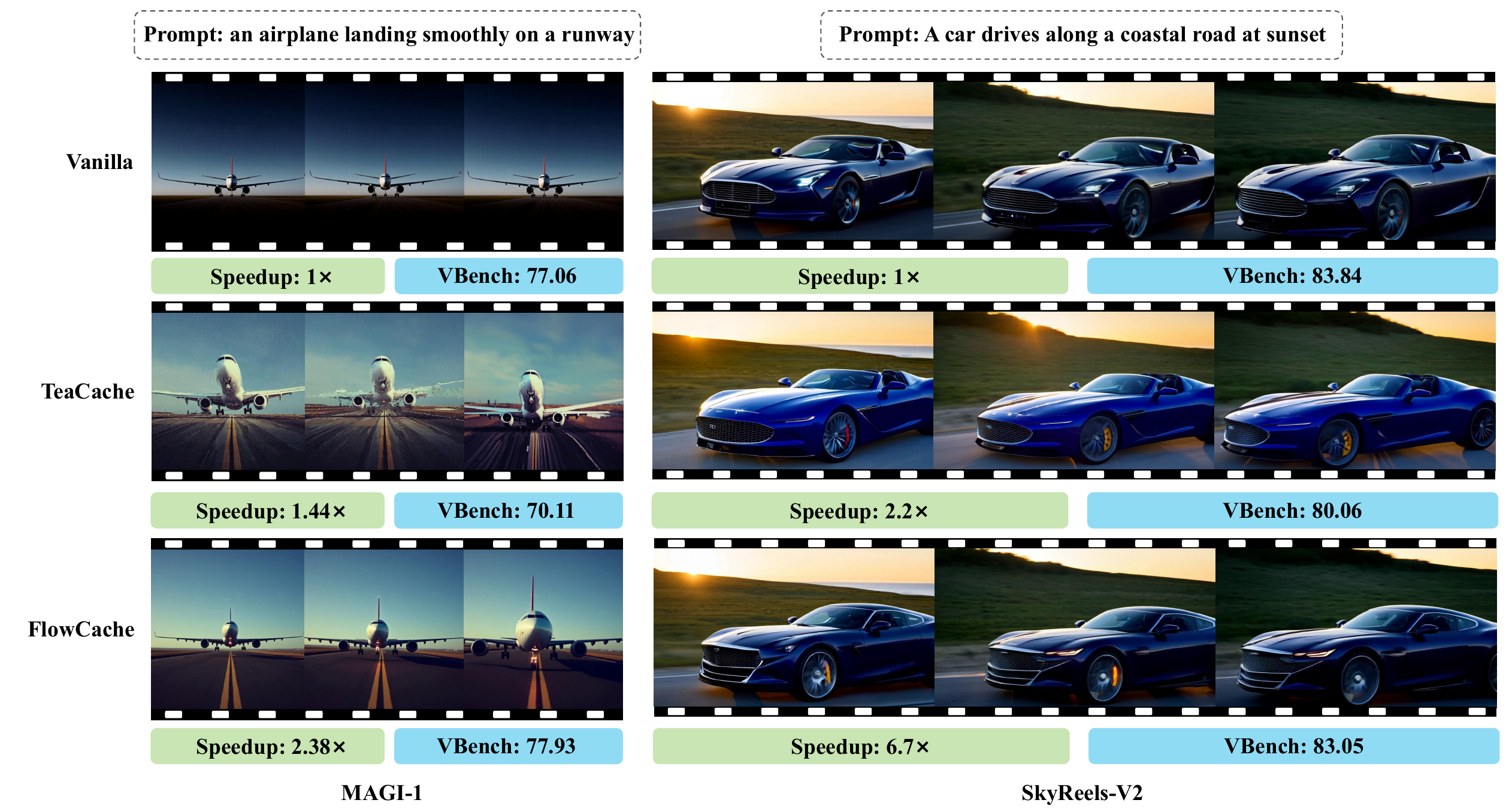}
\captionof{figure}{Visual comparison of video generation quality among the vanilla model, TeaCache, and our FlowCache. }
\label{fig:video_quality_comparison}
\end{minipage}

\section{Introduction}

While large language models have demonstrated remarkable reasoning and task execution capabilities~\citep{zhang2024offline, zhang2025agent, zhou2025mix, wu2024controlmllm, ji2024jm3d, wang2025nice, zhang2025nemotron, yan2025ucod, yan2025scout, ma2025benchmarking, li2025zooming, lin2026efficient}, video generation models~\citep{lin2024open, opensora, kong2024hunyuanvideo, opensora2, wan2025,videoworldsimulators2024,gao2025seedance} have demonstrated the ability to capture the objective physical laws governing the real world and are increasingly applied across a wide range of domains, including autonomous driving~\citep{fu2024exploring, ma2024unleashing}, social media content creation~\citep{videoworldsimulators2024}, special effects generation~\citep{gao2025seedance}, and personalized product demonstrations~\citep{wan2025}, supported by theoretical advancements in generation dynamics~\citep{sun2025curse, li2025diffusion, zhong2025test, wu2025reprompt}. Recent works have also explored digital human synthesis~\citep{li2025infinityhuman} and efficient scaling strategies for high-quality video generation~\citep{cheng2025resadapter, cui2025self, cui2026lol, yuan2025vgdfr, lu2025adversarial}. Among these models, the diffusion transformer architecture (DiT)~\cite{peebles2023scalable} has emerged as a mainstream approach due to its superior generative quality compared to traditional UNet-based structures~\citep{ronneberger2015u}, largely attributed to its strong scalability. However, the generation of long videos imposes prohibitively high computational costs, which significantly restricts the practical applicability of these models and limits the potential for personalized customization. In particular, the computational complexity of DiT inference scales quadratically with the length of the video token sequence, as a result of the transformer’s requirement to compute attention scores between all pairs of tokens. Moreover, this challenge is further exacerbated by the multi-timestep denoising process applied across video frames.

To decouple inference computational costs from sequence length, autoregressive video generation models~\citep{teng2025magi, chen2025skyreels} based on Causal Diffusion-Forcing (CDF)~\citep{chen2024diffusion, yin2025slow, song2025history} inference paradigms are garnering increasing attention. These models achieve this by partitioning the entire video sequence into fixed-frame video chunks and performing causal autoregressive denoising on them. With a fixed denoising window size, the computational complexity of long-video inference drops from quadratic to linear scaling, thereby establishing theoretical feasibility for real-time long-video generation. Nevertheless, at high resolutions, the computational demands of autoregressive video models for producing extended videos remain considerable, creating a substantial gap between current capabilities and the goal of real-time generation.For example, using an $A800$ GPU with batch size $1$, generating a $10$-second video at $720 \times 720$ resolution with full KV cache reference requires approximately $32$GB of memory and $50$ minutes of inference time for the MAGI-1-4.5B-distill model.

\begin{figure}[t]
\begin{center}
\includegraphics[width=1.0\linewidth]{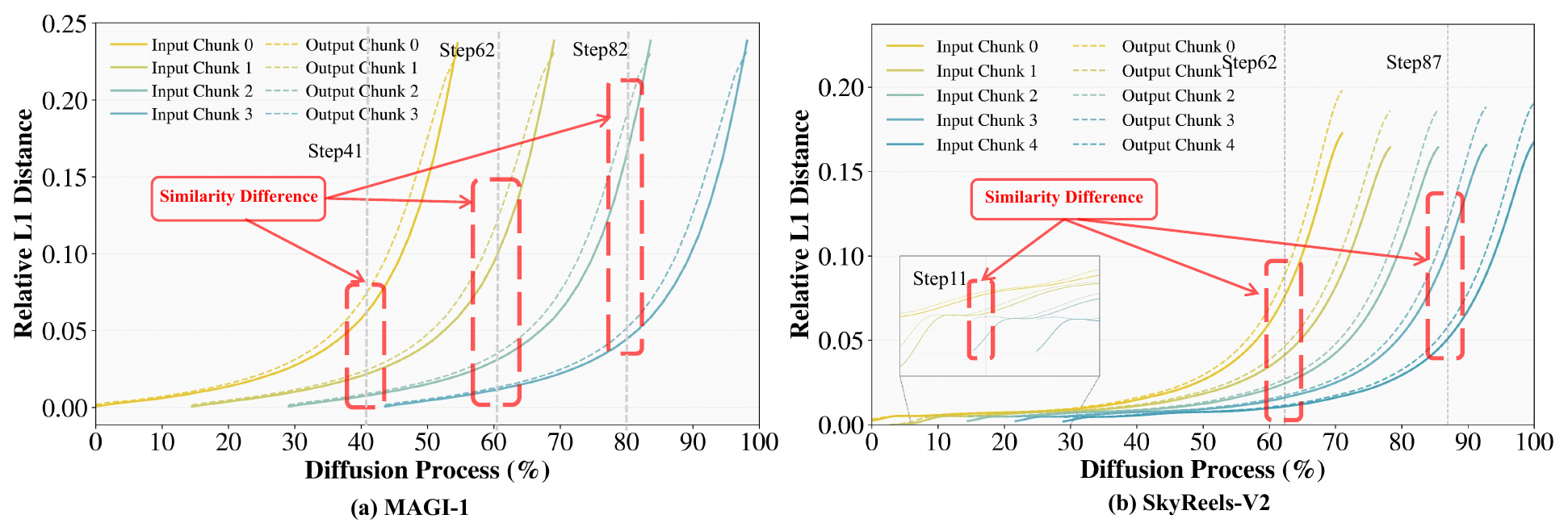}
\end{center}
\caption{Adjacent-timestep relative L1 distance across denoising trajectories for autoregressive video diffusion models. Denoising progress (\%) and relative L1 distance are shown on horizontal and vertical axes, respectively, with colors denoting distinct video chunks. Results for (a) MAGI-1 and (b) SkyReels-V2 reveal three patterns: (i) monotonic increase in relative L1 distance during denoising, confirming Theorem~\ref{the:l1_distance_increase_simplified}; (ii) substantial metrics divergence among chunks at identical timesteps reflects heterogeneous denoising stages, yielding variable reuse probabilities; (iii) persistently high similarity between model inputs and sampler outputs across all chunks.}
\label{fig:overview_splitquant}
\end{figure}

To address these challenges, a class of training-free, cache-based methods~\citep{yuan2024ditfastattn,liu2025reusing,liu2025timestep,kahatapitiya2024adaptive,bu2025dicache,chen2024delta,zhao2408real,Ma_2024_CVPR,selvaraju2024fora} has been developed to accelerate the denoising process in DiT models. These approaches are orthogonal to various compression techniques, such as quantization~\citep{qin2020forward,huang2024billm,qin2024accurate,qinbibert,ashkboos2024quarot,frantar2022gptq,li2024svdquant,zhong2023s,zhong2025towards, zhong2024erq, lin2024duquant, lin2025quantization, shao2023omniquant, chen2025efficientqat, chen2024prefixquant, chen2025scaling, chen2025int, he2023ptqd, he2023efficientdm, yang2025lrq} and sparsification~\citep{xu2025xattention,xi2025sparse,yang2025sparse, jaiswallow, liu2021we, liu2021sparse, liu2022unreasonable, yin2023outlier, huang2025dynamic, huang2025determining, zhang2023dynamic, lu2024alphapruning}. Unlike distillation-based methods~\citep{wang2023videolcm,yin2024one,meng2023distillation,sauer2024adversarial}, they facilitate faster DiT inference without incurring additional training overhead. Existing caching strategies can be broadly classified into two categories based on variations in feature caching intervals: uniform caching and dynamic caching. Uniform caching~\citep{chen2024delta,zhao2408real,Ma_2024_CVPR,selvaraju2024fora,liu2025reusing} exploits the high similarity of model outputs across adjacent timesteps by storing and reusing them at regular intervals. In contrast, dynamic caching~\citep{bu2025dicache,kahatapitiya2024adaptive,liu2025timestep} employs metrics computed during inference to decide whether features at a specific timestep should be recomputed. Nevertheless, these methods are predominantly tailored to synchronous video diffusion models, where all frames undergo consistent denoising levels at the same timestep. When transitioning to autoregressive video generation models, it is challenging to achieve an optimal trade-off between acceleration ratio and video generation quality.

As illustrated in Figure~\ref{fig:overview_splitquant}, we present similarity metrics between adjacent timesteps for different video chunks in the MAGI-1 and SkyReels-V2 models. Across diverse autoregressive models, we identify three consistent patterns: (a) As the denoising process progresses toward the clean video, the inter-timestep similarity of video chunks progressively deteriorates. (b) Within the same timestep, different video chunks exhibit heterogeneous denoising states, leading to substantial variations in their similarity measures. (c) Throughout the entire denoising trajectory, the inputs and outputs of autoregressive models maintain high similarity across all video chunks. These observations reveal that uniform treatment of all features fails to account for the heterogeneity in denoising states across different video chunks. Specifically, at any given timestep, video chunks approaching their clean state demonstrate low temporal similarity and necessitate recomputation. In contrast, video chunks in early denoising stages exhibit high similarity to their previous timestep features, making them suitable candidates for cache reuse. This contradiction motivates our proposed \textbf{FlowCache} method, wherein each video chunk maintains an independent caching strategy to accommodate these varying computational requirements. Through extensive experimentation, we demonstrate that our approach not only achieves superior acceleration ratios but also preserves the quality of generated videos.

Furthermore, autoregressive models generate substantial key-value (KV) caches during the generation process. When conceptualizing video chunks as analogous to tokens in language models, each completed denoising iteration of a video chunk results in a proportional expansion of the KV cache. The KV cache plays a crucial role in autoregressive video generation models by enabling the modeling of physical dynamics in the objective world~\citep{motamed2025physics} and controlling temporal consistency of subjects and backgrounds throughout the video sequence~\citep{teng2025magi}. While existing KV cache compression techniques~\citep{wan2024d2o,li2024snapkv,zhang2023h2o,cai2025r} have predominantly focused on large language models, the unique characteristics of video generation present distinct challenges and opportunities. To address this gap, we present the first comprehensive investigation of KV cache compression's impact on the generative capabilities of autoregressive video generation models. Our analysis encompasses two critical dimensions: the selection of salient KV cache entries and the quantification of their relative importance. We conduct this analysis across three hierarchical granularities—key video segments, key frames, and key pixel regions—to capture the multi-scale nature of video information. Experimental results demonstrate that our proposed KV cache compression method achieves substantial computational savings while preserving the quality of generated videos, establishing its effectiveness for autoregressive video generation tasks.

In summary, our key contributions are as follows:

\begin{itemize}
    \item We uncover fundamental heterogeneity in the denoising process of autoregressive video generation models, empirically demonstrating that different video chunks exhibit distinct similarity difference at identical timesteps. Through rigorous theoretical analysis, we establish that video chunks approaching complete denoising manifest significantly weaker similarity compared to their early-stage counterparts, providing crucial insights for optimization strategies.
    \item We introduce FlowCache, a pioneering method that revolutionizes caching in autoregressive video generation by treating each video chunk as an independent computational entity. This chunk-specific caching strategy dynamically adapts to the varying denoising states, maximizing both computational efficiency and generation flexibility.
    \item We present the first comprehensive study examining the impact of KV cache compression on autoregressive video generation models. Our multi-dimensional analysis encompasses two critical aspects—importance computation granularity and selective retention granularity—operating across three hierarchical levels: video segments, frames, and pixel regions. This systematic investigation offers valuable insights into the memory-quality tradeoffs in efficient video generation.
    \item Comprehensive empirical evaluation across diverse autoregressive video generation models confirms that our approach achieves superior computational efficiency while preserving generation quality. FlowCache accelerates inference by $2.38\times$ and $6.7\times$ compared to baseline methods on MAGI-1 and SkyReels-V2, respectively, with minimal impact on VBench scores ($0.87\uparrow$ for MAGI-1, $0.79\downarrow$ for SkyReels-V2). These results establish FlowCache as the new state-of-the-art for efficient autoregressive video generation.
\end{itemize}

\section{Related Work}
\label{Related_Work}

\textbf{AutoRegressive Video Generation.}
Our work builds upon recent advances in autoregressive video generation models~\citep{yin2025slow,teng2025magi,chen2025skyreels}. These methods adapt the successful paradigms of diffusion models and large language models (LLMs) by modeling a long video sequence as an autoregressive prediction of discrete chunks, effectively overcoming the quadratic complexity challenge of long-sequence modeling. Each video chunk serves as a fundamental generation unit, analogous to a token in LLMs, with the current chunk conditioned on all prior ones. 
In this paradigm, the chunk generator is typically a diffusion model trained with flow matching ~\citep{lipman2022flow}, enabling the autoregressive framework to integrate advanced diffusion-based generation for efficient production of long, high-fidelity video sequences. 

\textbf{Cache Strategy.}
Recent work on accelerating diffusion models—especially Diffusion Transformers (DiTs)—has centered on training-free feature caching that exploits temporal redundancy across denoising timesteps. TeaCache~\citep{liu2025timestep} uses timestep embedding–modulated inputs and polynomial fitting to estimate output differences for caching decisions. ToCa~\citep{zou2024accelerating} performs token-wise caching by assessing temporal redundancy and error sensitivity via attention-based scores. DiCache~\citep{bu2025dicache} dynamically aligns multi-step cache trajectories using an online shallow-layer probe to estimate output variation.

\textbf{KV Cache Compression.}
Key-Value (KV) cache compression reduces memory and computation costs in large language model (LLM) inference by selectively retaining past key and value states. H2O~\citep{zhang2023h2o} preserves “heavy-hitter” tokens with high cumulative attention scores. SnapKV~\citep{li2024snapkv} pre-selects critical tokens using a fixed observation window at the prompt’s end. D2O~\citep{wan2024d2o} introduces a two-level discriminative framework that dynamically allocates KV cache budgets across layers and includes a compensation mechanism.Additionally, recent studies have investigated retrieval-augmented strategies and cache merging to enhance long-context comprehension in video understanding~\citep{luo2025quota, luo2024video, zhang2024cam}.

\section{Methodology}

\subsection{Preliminary}

\textbf{Diffusion Model.} Diffusion models~\citep{ronneberger2015u,peebles2023scalable} achieve high-quality sample generation by establishing a reversible mapping between data and noise distributions. Their theoretical framework comprises two complementary processes: forward diffusion and reverse denoising. The forward diffusion process progressively transforms the original data distribution $\pi_0$ into a prior distribution $\pi_1$—typically a standard Gaussian—through systematic noise injection. Within the flow matching framework~\citep{lipman2022flow}, this process manifests as a deterministic flow in probability space, where the intermediate state at any time $t$ is expressed as:
\begin{equation}\label{eq:flow_match}
x_t = (1 - \sigma(t)) \cdot x_{data} + \sigma(t) \cdot x_{noise}.
\end{equation}
Here, $\sigma(t)$ denotes a monotonic scheduling function that governs the smooth interpolation between data and noise domains. The reverse denoising process constitutes the core mechanism of diffusion models, wherein a learned velocity field $v_{\theta}$ approximates the inverse mapping from noise to data. This process is governed by the following ordinary differential equation: $\frac{dx}{dt}= v_{\theta}(x_t, t, c)$, where $c$ represents optional conditioning information. During inference, the model employs numerical integration schemes to solve this ODE, typically utilizing first-order Euler discretization~\citep{karras2022elucidating} or more sophisticated sampling algorithms~\cite{lu2022dpm,zhao2023unipc}:
\begin{equation}\label{eq:flow_match_denoise}
x_{t_{i-1}} = x_{t_i} + v_{\theta}(x_{t_i}, t_i, c) · \Delta t_i
\end{equation}
By sampling from the noise distribution and integrating along the learned velocity field, the model generates new samples from the target distribution. This flow matching paradigm effectively captures the intrinsic geometric structure of data manifolds through direct optimization of the velocity field prediction, thereby enabling high-fidelity synthesis.

\textbf{AutoRegressive Video Model.} Autoregressive video models~\citep{yin2025slow,teng2025magi,chen2025skyreels} circumvent the quadratic complexity of sequence length by decomposing long videos into discrete chunks, treating each as an analogous unit to language model tokens. This chunking strategy enables efficient generation of extended video sequences through autoregressive denoising. Formally, a video is partitioned into $k$ chunks, where each chunk $X_i \in \mathbb{R}^{c_{in}\times
 s\times h\times w}~(i \in \{1, \cdots, k\})$ represents a latent representation with $c_{in}$ channels, spatial dimensions $h \times w$ after downsampling, and temporal length $s$. Assuming a maximum window size of $l$ video chunks permitted for the denoising process, the denoising process for the $i$-th chunk at timestep $t$ follows:
\begin{equation}\label{eq:ar_denoise}
X_{t-1}^i = X_{t}^{i} + v_{\theta}(X_{t}^{i}, t, c) \cdot \Delta t.
\end{equation}
Where $t \in \left [(i-1)T/l,(i+l-1)T/l \right]$ represents the timestep. $v_{\theta}(\cdot)$ denotes the output of autoregressive diffusion model with parameters $\theta$. $c$ denotes the conditional inputs (text, image, video).

\textbf{Cache Strategy.} Various caching strategies~\citep{bu2025dicache,Ma_2024_CVPR,liu2025timestep} have been developed to accelerate diffusion model inference by storing outputs from selected timesteps for reuse in subsequent iterations. These approaches typically quantify the similarity between consecutive timesteps using importance metrics, such as the relative L1 distance:
\begin{equation}\label{eq:relative_L1_distance}
L1_{rel}(X,t) = \frac{{\|X_{t-1}-X_{t}\|}_1}{ {\| X_{t} \|}_1}.
\end{equation}
Through the accumulation of these similarity measures, the system determines whether to reuse cached outputs (when the cumulative similarity remains below a predefined threshold) or to perform new model inference (when the threshold is exceeded). However, current caching strategies prove incompatible with autoregressive models due to their distinctive inference mechanisms.

\subsection{FlowCache}

In the context of autoregressive video models, we calculate the relative L1 distance defined in Equation~\ref{eq:relative_L1_distance} between consecutive timestep for $i$-th video chunk:

\begin{equation}\label{eq:ar_relative_L1_distance}
L1_{rel}(X,t,i) = \frac{{\|X_{t-1}^{i}-X_{t}^{i}\|}_1}{ {\| X_{t}^{i} \|}_1}=\frac{{\|v_{\theta}(X_{t}^{i}, t, c) \cdot \Delta t\|}_1}{ {\| X_{t}^{i} \|}_1}.
\end{equation}


The relative L1 distance metric in Equation~\ref{eq:ar_relative_L1_distance} possesses inherent mathematical properties that govern the temporal dynamics of the denoising process in autoregressive video models. The theoretical framework of flow matching and the underlying structure of the diffusion process naturally lead to specific behavioral patterns in the velocity field throughout the denoising trajectory. By analyzing the mathematical relationship between the velocity predictions and the state evolution from noise to data, we can derive fundamental properties of the relative L1 distance across different time steps. The following theorem establishes a key monotonic property that emerges from the optimal velocity field under standard diffusion model assumptions:

\begin{theorem}
\label{the:l1_distance_increase_simplified}
Assume the diffusion model $v_{\theta}$ has converged to the optimal velocity field of the flow matching objective. Further, assume the scheduling function is of the power-law form $\sigma(t) = (t/T)^p$ for some constant power $p>0$ and total time $T$. Given $0 < t_{1} < t_{2}\leq T$ and a data chunk $X^i$ whose initial noise state $X_T^i$ is drawn from $\mathcal{N}(0,I)$, the following inequality holds:
\begin{equation}
\label{eq:theorem1_simplified}
L1_{rel}(X, t_1, i) \geq L1_{rel}(X, t_2, i).
\end{equation}
\end{theorem}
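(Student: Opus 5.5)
The plan is to make the relative L1 distance of Equation~\ref{eq:ar_relative_L1_distance} explicit by substituting the optimal velocity field, and then to separate it into a purely schedule-dependent factor and a state-dependent factor, treating each in turn.

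\textbf{Step 1 (optimal velocity).} Differentiating the interpolation in Equation~\ref{eq:flow_match} gives $\dot x_t = \sigma'(t)(x_{noise}-x_{data})$. Hence the minimizer of the flow matching objective is the conditional expectation
\[
v^\ast(x,t)=\sigma'(t)\,\mathbb{E}\!\left[x_{noise}-x_{data}\mid x_t=x\right].
\]
I would use the interpolation identity to eliminate $x_{noise}$. This yields $v^\ast(x,t)=\sigma'(t)\,\bigl(x-\hat x_0(x,t)\bigr)/\sigma(t)$, where $\hat x_0$ denotes the posterior mean of the data.

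\textbf{Step 2 (separate the schedule).} With $\sigma(t)=(t/T)^p$ we have $\sigma'(t)/\sigma(t)=p/t$. Substituting into Equation~\ref{eq:ar_relative_L1_distance} then gives
\[
L1_{rel}(X,t,i)=\frac{p\,\Delta t}{t}\cdot R_i(t),\qquad R_i(t)=\frac{\|X_t^i-\hat x_0(X_t^i,t)\|_1}{\|X_t^i\|_1}.
\]
Here $\Delta t$ is the fixed step size of the sampler. The prefactor $p\Delta t/t$ is strictly decreasing in $t$, and this should hold for every $p>0$. This is the clean part of the argument: it is exactly where the power-law assumption enters, and it is the source of the growth of the distance as $t\to 0$.

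\textbf{Step 3 (control $R_i(t)$).} It remains to show that $R_i(t)$ is nonincreasing in $t$, or at least that its variation cannot overturn the $1/t$ factor. Along the exact trajectory we can write $X_t^i=\hat x_0+\sigma(t)\hat\epsilon$. Then the numerator equals $\sigma(t)\|\hat\epsilon\|_1$, and
\[
R_i(t)=\frac{\|\hat\epsilon\|_1}{\|\hat\epsilon+\hat x_0/\sigma(t)\|_1}.
\]
Near $t=T$ we have $\sigma\approx 1$ and $X_T^i\sim\mathcal N(0,I)$, so $\hat x_0\approx 0$ and $R_i\approx 1$. As $t$ decreases, $1/\sigma$ grows and the data term reweights the denominator. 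I would therefore prove the monotonicity in expectation over $X_T^i\sim\mathcal N(0,I)$ rather than pathwise. The first step is to use coordinatewise independence and replace $\mathbb E\|\cdot\|_1$ by its Gaussian-scale proxy $\sqrt{2/\pi}\,\|\cdot\|_2$. After that, I would verify that $\frac{d}{dt}\log L1_{rel}=-\frac1t+\frac{d}{dt}\log R_i\le 0$.

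\textbf{Main obstacle.} The delicate step is Step 3. The denominator $\|X_t\|_1$ is not monotone in general, because $\sqrt{(1-\sigma)^2\|x_0\|^2+\sigma^2\|\epsilon\|^2}$ is minimized at an interior value of $\sigma$. A fully pathwise statement may therefore be false without extra structure. What I would try first is to bound $\bigl|\frac{d}{dt}\log R_i\bigr|$ by $\frac{p\,\sigma'}{\sigma}\cdot$ (a ratio of data norm to noise norm), and then show that this is at most $1/t$. This requires a mild normalization assumption such as $\|x_{data}\|\lesssim\|x_{noise}\|$, which holds for standardized video latents. If that fails, I would fall back on stating the inequality for the dominant term $p\Delta t/t$, treating $R_i(t)=\Theta(1)$ as a boundedness assumption. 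This is consistent with the empirically near-flat input–output similarity reported in Figure~\ref{fig:overview_splitquant}(iii).
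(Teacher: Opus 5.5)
Your Steps 1--2 follow the paper's own decomposition: the paper writes $L1_{rel}=A(t)B(t)\Delta t$ with $A=\sigma'/\sigma=p/t$, using the realized $X_0^i$ where you more carefully use the posterior mean $\hat x_0$. The paper also closes Step 3 the way you hope to, by asserting that the $p/t$ decay dominates, i.e.\ $B'/B\le 1/t$. That step cannot be completed, because the $1/t$ factor is illusory. With your own representation $X_t^i=\hat x_0+\sigma(t)\hat\epsilon$ you have $R_i(t)=\sigma(t)\|\hat\epsilon\|_1/\|X_t^i\|_1$, so the $\sigma$ cancels and
\[
L1_{rel}(X,t,i)=\sigma'(t)\,\Delta t\,\frac{\|\hat\epsilon\|_1}{\|X_t^i\|_1}=\frac{p\,\Delta t}{T^{p}}\,t^{p-1}\,\frac{\|\hat\epsilon\|_1}{\|X_t^i\|_1}.
\]
The true schedule factor is $t^{p-1}$, which is increasing for $p>1$ and constant for $p=1$. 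Moreover, $R_i$ is not $\Theta(1)$: it vanishes like $\sigma(t)$ as $t\to 0$. So your fallback is unavailable, and a two-sided bound on $R_i$ would only compare $L1_{rel}$ values up to constants, not give a monotone inequality.

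In fact the statement is false as written, so no proof along these lines can succeed. Take a Dirac data law at some $a\neq 0$ with no zero entries. The optimal field is then exactly $\frac{\sigma'}{\sigma}(x-a)$, the trajectories are the straight lines $X_t=a+\sigma(t)(\epsilon-a)$, and $\hat\epsilon=\epsilon-a$ is constant.

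For $p>1$, the display gives $L1_{rel}(X,t_1,i)\to 0$ as $t_1\to 0^+$, while $L1_{rel}(X,T,i)=\frac{p\,\Delta t\,\|\epsilon-a\|_1}{T\,\|\epsilon\|_1}>0$. The inequality therefore fails for small $t_1$ with $t_2=T$. The same failure occurs more generally: $\|v^\ast\|_1=\sigma'(t)\,\|\mathbb{E}[x_{noise}-x_{data}\mid X_t]\|_1=O(t^{p-1})$ whenever that conditional mean stays bounded and $X_0^i\neq 0$.

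For $p=1$, $L1_{rel}\propto 1/\|X_t\|_1$. The right $s$-derivative of $\|(1-s)a+s\epsilon\|_1$ at $s=0$ is $\langle \mathrm{sign}(a),\epsilon\rangle-\|a\|_1$. This equals $-\|a\|_1$ in expectation, and is negative with high probability whenever $\|a\|_1$ is large compared with the square root of the dimension. Hence $L1_{rel}$ increases in $t$ near $0$. Neither averaging over $X_T$ nor a normalization $\|x_{data}\|\le C\|x_{noise}\|$ changes this.

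The paper's own argument is internally inconsistent at exactly this point. Since $B'/B=\frac{p}{t}-\frac{d}{dt}\log\|X_t^i\|_1$, its claim that $\|X_t^i\|_1$ decreases in $t$ forces $B'/B>p/t\ge 1/t$ for every $p\ge 1$, which contradicts the needed $B'/B\le 1/t$.

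A provable version needs an extra hypothesis. In the straight-line case, monotonicity is equivalent to $\frac{d}{dt}\log\|X_t^i\|_1\ge\frac{p-1}{t}$ on $(0,T]$. This can hold only for $p\le 1$, and generically only for $p<1$. For example, for unit-variance latents with $\|(1-s)a+s\epsilon\|_1\propto\sqrt{(1-s)^2+s^2}$, it holds exactly when $p\le 2\sqrt{2}-2\approx 0.83$.
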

The proof appears in the Appendix~\ref{Proof of Theorem}. Theorem~\ref{the:l1_distance_increase_simplified} establishes that the relative L1 distance between model outputs increases as video chunks converge toward real video, reflecting diminished chunk similarity. Figure~\ref{fig:overview_splitquant} visualizes this relationship by plotting the relative L1 distance between consecutive timesteps across all video chunks for various autoregressive video models, with the denoising progression on the horizontal axis and the relative L1 distance on the vertical axis. Distinct colors denote individual video chunks.
Three key patterns emerge from this analysis:

\begin{itemize}
    \item The relative L1 distance monotonically increases as denoising progresses toward real video, demonstrating systematic degradation of chunk similarity at later timesteps—a direct consequence of Theorem~\ref{the:l1_distance_increase_simplified}.
    \item Cross-sectional analysis at fixed timesteps reveals pronounced divergence among chunks (represented by different colors), indicating heterogeneous denoising stages and consequently disparate similarity metrics.
    \item Model inputs and post-sampler outputs exhibit consistently high similarity across all video chunks throughout the denoising trajectory.
\end{itemize}
Therefore, based on Theorem~\ref{the:l1_distance_increase_simplified}, we derive the following corollary:

\begin{corollary}
\label{coro:l1_distance_diff_simplified}
Assuming that distinct video chunks $i \neq j$ represent heterogeneous content such that their state norms differ at any intermediate timestep $t \in (0, T)$ (i.e., $\|X_t^i\|_1 \neq \|X_t^j\|_1$), and that the model's update magnitude $\|v_{\theta}(X_{t}^{k}, t, c) \cdot \Delta t\|_1$ is approximately invariant across chunks for a fixed $t$, then their relative L1 distances are unequal:
\begin{equation}
L1_{rel}(X, t, i) \neq L1_{rel}(X, t, j).
\end{equation}
\end{corollary}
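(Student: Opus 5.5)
The plan is to argue directly from the chunkwise form of the relative L1 distance in Equation~\ref{eq:ar_relative_L1_distance}. Fix an intermediate timestep $t\in(0,T)$ and two distinct chunks $i\neq j$. Write $N_k := \|v_{\theta}(X_{t}^{k}, t, c)\cdot\Delta t\|_1$ for the update magnitude of chunk $k$ and $D_k := \|X_t^k\|_1$ for its state norm, so that $L1_{rel}(X,t,k)=N_k/D_k$. The first step is to check that every denominator is strictly positive, so that the ratio is well defined. For $t\in(0,T)$, Equation~\ref{eq:flow_match} with $\sigma(t)\in(0,1)$ mixes a Gaussian noise component, which is nonzero almost surely. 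Hence $D_k>0$ almost surely.

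Next, I would make the invariance hypothesis precise by treating it as an equality at the level of the idealization, $N_i=N_j=:N$. One could alternatively phrase it with a tolerance, $|N_i-N_j|\le\varepsilon$. Under exact invariance the argument is a one-liner:
\begin{equation*}
L1_{rel}(X,t,i)-L1_{rel}(X,t,j)=N\left(\frac{1}{D_i}-\frac{1}{D_j}\right)=\frac{N\,(D_j-D_i)}{D_iD_j}.
\end{equation*}
By the heterogeneity assumption, $D_i\neq D_j$, so this difference vanishes only if $N=0$.

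The step I expect to be the main obstacle is ruling out $N=0$. For this I would invoke the setting of Theorem~\ref{the:l1_distance_increase_simplified}. Under the optimal flow-matching velocity, $v$ is (up to the schedule derivative $\sigma'(t)=p t^{p-1}/T^p>0$) the conditional expectation of $x_{noise}-x_{data}$. This quantity is nonzero almost surely for a nondegenerate data distribution. Since $\Delta t\neq 0$ for an actual denoising step, we get $N>0$, and therefore $L1_{rel}(X,t,i)\neq L1_{rel}(X,t,j)$.

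Finally, I would add a robustness remark for the approximate version $|N_i-N_j|\le\varepsilon$. The same identity gives
\begin{equation*}
\left|\frac{N_i}{D_i}-\frac{N_j}{D_j}\right|\ \ge\ \frac{N_i\,|D_j-D_i|}{D_iD_j}-\frac{\varepsilon}{D_j}.
\end{equation*}
The right-hand side is strictly positive whenever $\varepsilon< N_i|D_i-D_j|/D_i$. So inequality persists for sufficiently small deviations from exact invariance. This is also what explains the cross-chunk divergence at fixed timesteps seen in Figure~\ref{fig:overview_splitquant}.
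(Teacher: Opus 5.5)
Your proposal is correct and uses the same idea as the paper's proof: equal update magnitudes combined with unequal state norms force unequal quotients. The paper phrases this as the ratio $L1_{rel}(X,t,i)/L1_{rel}(X,t,j)\approx \|X_t^j\|_1/\|X_t^i\|_1\neq 1$, while you phrase it through the difference $N(D_j-D_i)/(D_iD_j)$.

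Your version is tighter than the paper's in two places.
\begin{itemize}
\item It checks $D_k>0$ and $N>0$, whereas the paper silently divides by $L1_{rel}(X,t,j)$. Note that $N>0$ borrows the optimal-velocity hypothesis of Theorem~\ref{the:l1_distance_increase_simplified}, which the corollary leaves implicit.
\item Your condition $\varepsilon<N_i|D_i-D_j|/D_i$ makes explicit what the paper's step from ``$\approx$ a quantity $\neq 1$'' to ``$\neq 1$'' actually requires.
\end{itemize}
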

The proof appears in the Appendix~\ref{Proof of Corollary}. The preceding theorem and corollary establish the necessity of maintaining independent cache strategies for individual chunks. We therefore introduce FlowCache, a novel caching methodology tailored for autoregressive video models. 

\begin{figure}[t]
\begin{center}
\includegraphics[width=1.0\linewidth]{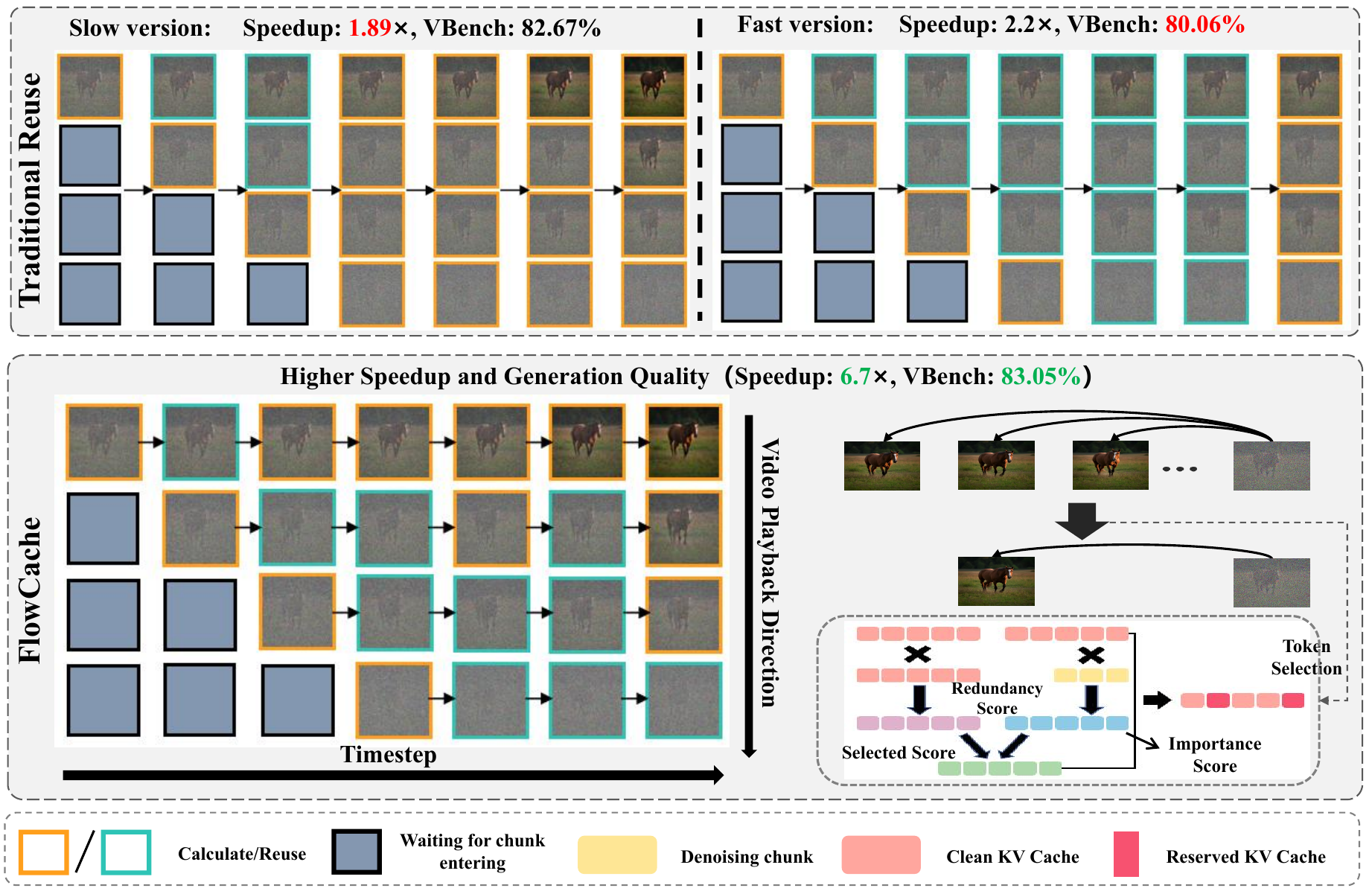}
\end{center}
\caption{Comparison of caching strategies in autoregressive video generation. The top part illustrates the Traditional Reuse strategy, which applies a uniform caching policy across all video chunks (i.e., all chunks at the same timestep share the same compute/reuse status, except for the newly initialized denoising chunk, which must be computed). In contrast, our FlowCache (bottom left) employs a chunkwise adaptive caching policy, dynamically deciding for each chunk whether to reuse cached features or perform recomputation based on its own relative L1 distance trajectory. The bottom right panel details FlowCache’s KV cache management: it maintains a Reserved KV Cache that selectively retains historically important and non-redundant tokens.}
\label{fig:overview_chunkcache}
\end{figure}

Figure~\ref{fig:overview_chunkcache} illustrates that conventional caching approaches employ uniform strategies across all video chunks, failing to accommodate the heterogeneous similarity characteristics arising from disparate denoising stages within the same timestep. This uniformity constrains flexibility, limits acceleration potential, and ultimately degrades generation quality. In contrast, FlowCache independently evaluates each chunk's similarity profile. 
Specifically, for the $i$-th video chunk:


\begin{equation}
\resizebox{0.95\textwidth}{!}{$
f(X, t, i) = 
\begin{cases}
    0 & \text{if } t \in (T-m, T], \\
    0 & \text{if } t \in [0, T-m] \text{ and } f(X, t+1, i) + L1_{\mathrm{rel}}(X, t, i) > \epsilon, \\
    f(X, t+1, i) + L1_{\mathrm{rel}}(X, t, i) & \text{if } t \in [0, T-m] \text{ and } f(X, t+1, i) + L1_{\mathrm{rel}}(X, t, i) \leq \epsilon.
\end{cases}
$}
\label{eq:reuse_threshold}
\end{equation}
where $\epsilon$ denotes the threshold value and $m$ represents the number of initial timesteps excluded from cache reuse ($m=5$ for MAGI-1, $m=4$ for SkyReels-V2). When $f(X, t, i) = 0$, forward computation is performed for $i$-th video chunk; otherwise, cached activations are reused. Our empirical analysis demonstrates that excluding early timesteps is critical for preserving generation quality. The threshold values are: MAGI-slow: 0.01, MAGI-fast: 0.015, SkyReels-V2-slow: 0.1, SkyReels-V2-fast: 0.15.

Theorem~\ref{the:l1_distance_increase_simplified} demonstrates that any chunks approaching full denoising exhibit diminished similarity at given timestep, warranting reduced reuse. Conversely, chunks proximate to Gaussian noise maintain higher similarity, enabling reuse across multiple consecutive timesteps. By fully considering the inference paradigm of autoregressive models, FlowCache maximizes caching flexibility and achieves substantial performance gains. Empirically, our method delivers $2.38\times$ acceleration on MAGI-1 with $0.87\%$ improvement in VBench scores relative to base model, and $6.7\times$ speedup on SkyReelsV2 with $0.79\%$ VBench score degradation.

\subsection{KVcache Compression}
Autoregressive video generation models synthesize videos of arbitrary length by sequentially denoising groups of spatiotemporal chunks—mirroring the autoregressive token generation process in large language models (LLMs). In this paradigm, fully denoised video chunks (i.e., clean video chunks) are stored in a Key-Value (KV) cache, which is attended to by the group of chunks currently undergoing denoising. This mechanism enables the model to condition on historical context and maintain temporal coherence throughout the sequence.

In LLMs, KV cache compression commonly relies on importance scoring: historical tokens are ranked by their attention relevance to recent context, and only the top entries are retained (e.g., H2O~\citep{zhang2023h2o}, D2O~\citep{wan2024d2o}). While effective for text, this strategy performs poorly in video generation under a fixed cache budget. The core issue stems from the high spatial and temporal redundancy inherent in video data: many tokens assigned high importance scores are nearly identical. As a result, importance-only selection retains multiple redundant yet individually salient segments. This inefficient allocation starves the cache of diverse, potentially useful historical content, ultimately degrading temporal consistency.

We argue that effective KV cache compression for video generation must jointly account for \textbf{importance} and \textbf{redundancy}—a principle also adopted in efficient reasoning models (e.g., R-KV~\citep{cai2025r}). Our approach selects historical entries that are both relevant to the current denoising tokens and mutually dissimilar, ensuring optimal use of the limited cache budget.

Specifically, we allocate a fixed-size KV cache buffer $B_{\text{total}}$, partitioned into two regions:
\begin{enumerate}
    \item A \textbf{compressed clean-chunk region} of size $B_{\text{budget}}$, storing compressed KV states from all clean chunks.
    \item A \textbf{current denoising region} of size $B_{\text{active}}$ , holding the KV states of the group of video chunks currently undergoing denoising.
\end{enumerate}

When $B_{\text{total}}$ is full, we concatenate the KV cache from the clean-chunk region (uncompressed during the first compression step) with the KV states of any chunks in the current region that have just completed denoising. The merged set is then compressed via our selection criterion. After compression, the current denoising region is partially freed, creating space for the next undenoised chunk.

Similar to R-KV~\citep{cai2025r}, our selection criterion balances importance and redundancy. Let $Q \in \mathbb{R}^{L_q \times H_q \times d}$ denote the queries from the current denoising tokens, and $K_{\text{clean}} \in \mathbb{R}^{L_k \times H_k \times d}$ the keys from clean chunks, where $L_q$, $L_k$ are sequence lengths, $H_q$, $H_k$ are numbers of query and key heads, and $d$ is the head dimension. The importance of each historical token is derived from the attention weights between $Q$ and $K_{\text{clean}}$. Specifically, we compute attention scores, apply softmax over the key dimension, and average across the query sequence dimension to obtain a per-head importance distribution over historical tokens:
\begin{equation}\label{eq:key_important_score}
    Imp_j^{(h)} = \frac{1}{L_q} \sum_{i=1}^{L_q} \left[ \text{softmax} \left( \frac{q_i^{(h)} (K_{\text{clean}}^{(h)})^\top}{\sqrt{d}} \right) e_j \right],
\end{equation}
where $h$ indexes the key head (with query heads grouped accordingly), and $q_i^{(h)}$, $k_j^{(h)}$ are the corresponding query and key vectors. The vector $e_j \in \mathbb{R}^{L_k \times1}$ is a one-hot column vector containing a ``$1$'' at the j-th position and zeros elsewhere. To enhance robustness, we apply a 1D max-pooling with kernel size $k$ and padding $k/2$ to $\text{Imp}^{(h)}$:
\begin{equation}
    \widetilde{\text{Imp}}^{(h)} = \text{MaxPool1D}\left( \text{Imp}^{(h)}; \, \text{kernel}=k, \, \text{padding}=k/2 \right).
\end{equation}

For redundancy, we compute per-head cosine similarity matrices over $K_{\text{clean}}$. After $\ell_2$-normalizing the keys, we form $S^{(h)}_{ij} = (k_i^{(h)})^\top k_j^{(h)}$ for each head $h$, explicitly set diagonal entries to zero ($S^{(h)}_{ii} = 0$), and average across the token dimension and apply softmax:
\begin{equation}\label{eq:key_redundancy_score}
    \text{Red}^{(h)}_j = \text{softmax}\left( \frac{1}{L_k} \sum_{i=1}^{L_k} S^{(h)}_{ij} \right)_j.
\end{equation}
This yields a per-head redundancy distribution $\text{Red}^{(h)} \in \mathbb{R}^{L_k}$, where higher values indicate tokens that are, on average, more similar to others in the same head.

Finally, we combine the pooled importance and redundancy into a unified per-head selection score:
\begin{equation}\label{eq:key_total_score}
    \text{Score}^{(h)}_j = \lambda \cdot \widetilde{\text{Imp}}^{(h)}_j - (1 - \lambda) \cdot \text{Red}^{(h)}_j,
\end{equation}
where $\lambda \in [0,1]$ is a mixing coefficient. For each attention head $h$, we select the top-$B$ tokens with the highest $\text{Score}^{(h)}$, where $B$ is the compression budget.

By jointly optimizing for relevance and diversity, our method preserves long-range temporal consistency while substantially reducing the memory and computational overhead of DiT attention—enabling efficient, high-fidelity long-form video generation.

Furthermore, we provide an analysis of the efficiency of KV cache compression—covering both GPU memory footprint and computational overhead. For details, please see Appendix~\ref{efficiency analysis kvcache}.

\section{Experiments}
\label{experiments}

\subsection{Settings}

\textbf{Base Models}
To evaluate the efficacy of our proposed method, we selected two representative diffusion models based on the autoregressive paradigm: MAGI-1-4.5B-distill~\citep{teng2025magi} and SkyReels-V2-1.3B-540P~\citep{chen2025skyreels}. Although both models inference autoregressively, their implementations differ significantly.  MAGI-1 processes a video by dividing it into chunks, applying autoregressive modeling between these chunks. A sliding window mechanism is employed to constrain the number of chunks being denoised simultaneously. In contrast, SkyReels-V2 adopts a hierarchical approach: each video chunk is further partitioned into blocks. Autoregression occurs within a chunk, where earlier blocks are denoised for a predefined number of steps before subsequent blocks begin the denoising process. This design results in non-uniform denoising levels across blocks at any given denoising step.

\textbf{Evaluation Metrics}
A comprehensive evaluation is conducted, focusing on two critical aspects: the perceptual quality of the generated videos and the computational efficiency during inference. For quality assessment, we adopt established metrics including VBench~\citep{huang2024vbench}, LPIPS~\citep{zhang2018unreasonable}, PSNR, and SSIM, following TeaCache~\citep{liu2025timestep}. We also acknowledge recent advancements in blind and reference-based image quality assessment which focus on semantic and distortion sensitivity~\citep{li2023adaptive, li2024integrating, li2025feature, li2025distilling, li2025few}. To quantify efficiency, we measure the computational cost in Floating Point Operations (FLOPs) and the practical inference latency. For all video generation quality assessments, we utilize the VBench-long benchmark, which we refer to as VBench throughout the paper for brevity.

\textbf{Implementation Detail}
All experiments are implemented using PyTorch and executed on NVIDIA A800 80GB GPUs. For MAGI-1, each chunk consists of 24 frames, denoised for 64 steps per chunk, with a total of 10 chunks generated. For SkyReels-V2, the inference configuration uses chunks of 97 frames with an overlap of 17 frames between consecutive chunks. Each block within a chunk is denoised for 50 steps, and a total of two chunks are generated. Additionally, the execution of KV cache compression and the measurement of speedup ratios are explained in the Appendix ~\ref{implementation kvcache}.

\subsection{Results}
Quantitative results in Table~\ref{tab:table1} demonstrate the clear superiority of FlowCache over TeaCache. Evaluated under both slow and fast configurations, FlowCache delivers higher video quality and reduced latency across diverse models and acceleration ratios.
As shown in Table~\ref{tab:table1}, FlowCache demonstrates superior efficiency-quality trade-offs compared to TeaCache. On MAGI-1, while TeaCache-fast suffers a significant quality degradation (VBench drops from 77.50 to 70.11) when accelerating from 1.12× to 1.44×, FlowCache-fast achieves a 2.38× speedup while maintaining high visual quality (VBench 77.93), even slightly surpassing the baseline. FlowCache-slow delivers the best quality among all variants with a 1.86× acceleration.The advantage is more pronounced on SkyReels-V2. FlowCache-slow achieves 5.88× acceleration with minimal quality loss (VBench 83.12), significantly outperforming TeaCache-slow (1.89×, VBench 82.67). FlowCache-fast maintains excellent quality (VBench 83.05) at 6.7× speedup, whereas TeaCache-fast drops to 80.06 at   2.2× acceleration. 

\begin{table*}[t!]
\small
\centering
\caption{Quantitative evaluation of inference efficiency and visual quality in autoregressive video generation models.}
\setlength{\tabcolsep}{1mm}{
\begin{tabular}{llccccccc}
\toprule  
\multirow{2}{*}[1.5ex]{\raggedright Model} & 
\multirow{2}{*}[1.5ex]{\raggedright Method} &
\makecell{PFLOPs\\ $\downarrow$} &
\makecell{Speedup\\ $\uparrow$} &
\makecell{Latency(s)\\ $\downarrow$} &
\makecell{VBench\\ $\uparrow$} &
\makecell{LPIPS\\ $\downarrow$} &
\makecell{SSIM\\ $\uparrow$} &
\makecell{PSNR\\ $\uparrow$} \\
\midrule  

\multirow{5}{*}{MAGI-1} & Vanilla & $306$  & $1\times$ & $2873$ & $77.06\%$ & - & - & - \\
& TeaCache-slow& $294$  & $1.12\times$ & $2579$ & $77.50\%$ & $0.6211$ & $0.2801$ & $13.26$ \\
& TeaCache-fast& $225$  & $1.44\times$ & $1998$ & $70.11\%$ & $0.8160$ & $0.1138$ & $8.94$ \\
& \cellcolor{MyBlue}FlowCache-slow & \cellcolor{MyBlue}$161$ & \cellcolor{MyBlue}$1.86\times$ & \cellcolor{MyBlue}$1546$ & \cellcolor{MyBlue}\boldmath{$78.96\%$} & \cellcolor{MyBlue}\boldmath{$0.3160$} & \cellcolor{MyBlue}\boldmath{$0.6497$} & \cellcolor{MyBlue}\boldmath{$22.34$} \\

& \cellcolor{MyBlue}FlowCache-fast & \cellcolor{MyBlue}\boldmath{$140$} & \cellcolor{MyBlue}\boldmath{$2.38\times$} & \cellcolor{MyBlue}\boldmath{$1209$} & \cellcolor{MyBlue}$77.93\%$ & \cellcolor{MyBlue}$0.4311$ & \cellcolor{MyBlue}$0.5140$ & \cellcolor{MyBlue}$19.27$ \\

\midrule  
\multirow{5}{*}{SkyReels-V2} & Vanilla & $113$  & $1\times$ & $1540$ & $83.84\%$ & - & - & -  \\
& TeaCache-slow& $58$  & $1.89\times$ & $814$ & $82.67\%$ & $0.1472$ & $0.7501$ & $21.96$  \\
& TeaCache-fast& $49$  & $2.2\times$ & $686$ & $80.06\%$ & $0.3063$ & $0.6121$ & $18.39$ \\
& \cellcolor{MyBlue}FlowCache-slow  & \cellcolor{MyBlue}$36$  & \cellcolor{MyBlue}$5.88\times$ & \cellcolor{MyBlue}$262$ & \cellcolor{MyBlue}\boldmath{$83.12\%$} & \cellcolor{MyBlue}\boldmath{$0.1225$} & \cellcolor{MyBlue}\boldmath{$0.789$} & \cellcolor{MyBlue}\boldmath{$23.74$}  \\
& \cellcolor{MyBlue}FlowCache-fast & \cellcolor{MyBlue}$28$  & \cellcolor{MyBlue}\boldmath{$6.7\times$} & \cellcolor{MyBlue}\boldmath{$230$} & \cellcolor{MyBlue}$83.05\%$ & \cellcolor{MyBlue}$0.1467$ & \cellcolor{MyBlue}$0.7635$ & \cellcolor{MyBlue}$22.95$  \\

\bottomrule 
\end{tabular}
}

\label{tab:table1}
\end{table*}

\subsection{Ablation Study}

\textbf{Effectiveness of Individual Components.}
We conduct ablation studies on two key components—reuse strategy and KV cache compression—to validate the effectiveness of our approach. To ensure a fair comparison, we consistently employ the fast variant across all ablation settings. As shown in Table~\ref{tab:table2}, on the MAGI-1 model, applying TeaCache~\citep{liu2025timestep} reuse strategy leads to a significant degradation in video generation quality, whereas chunkwise reuse preserves the model’s original performance. Moreover, KV cache compression has a negligible impact on generation quality, confirming the soundness of our designed KV cache compression mechanism. Consistent improvements are observed on the SkyReels-V2 model. TeaCache reuse strategy results in substantial quality degradation. In contrast, chunkwise reuse achieves acceleration comparable to TeaCache but without compromising performance, robustly demonstrating its effectiveness. The incorporation of KV cache compression again incurs only a negligible performance loss, further validating the generalizability of our method.

\textbf{Application on the 16-step distilled model.} We conducted comprehensive experiments on the 16-step distilled MAGI-1 model. Due to computational constraints, we evaluated performance using representative VBench metrics selected based on established practices in video generation compression research~\citep{zhao2024vidit, feng2025q, yangveta, shao2025tr}. To facilitate an intuitive comparison, we compute the average scores of the selected VBench metrics using the official normalization and weighting methodology provided by the VBench benchmark. For details, please see Appendix ~\ref{ablation 16step-distill}.

\textbf{Performance under Different KV Cache Compression Settings.} MAGI-1~\citep{teng2025magi} has conducted an investigation into the Key-Value (KV) range mechanism on the Physics-IQ Benchmark~\citep{motamed2025physics}. This task imposes stringent demands on both physical commonsense reasoning and temporal modeling capabilities. To produce physically plausible videos, the model must effectively leverage and retain sufficiently long historical context.

Motivated by this, our work systematically investigates how to maximally preserve critical historical context during KV cache compression in autoregressive video generation models, using the Physics-IQ benchmark as the evaluation setting. 

First, we conduct a granular analysis of queries and keys during the KV cache compression process, specifically investigating: (1) the appropriate granularity at which queries should be used to select salient KV cache entries, and (2) the appropriate granularity for keys to be retrieved. For further details, please refer to ~\ref{ablation compression granularity}.  

Second, We perform an ablation study on the hyperparameter $\lambda$ in Equation~\ref{eq:key_total_score} to systematically examine how to balance importance and redundancy reduction in KV cache compression. For further details, please refer to ~\ref{ablation lambda}.   

Third, we evaluate our method under varying KV cache budgets to quantitatively analyze the trade-off between video generation quality and GPU memory consumption. For further details, please refer to ~\ref{ablation different KV Cache budget}.

\label{Ablation_Study}

\begin{table*}[t!]
\small
\centering
\caption{Ablation Study of FlowCache's Key Components: Reuse Strategy and KV Cache Compression.}
\setlength{\tabcolsep}{1mm}{
\begin{tabular}{lcccccccc}
\toprule  
\multirow{2}{*}[1.5ex]{\raggedright Model} & 
\multirow{2}{*}[1.5ex]{\makecell{Reuse \\ Strategy}} &
\multirow{2}{*}[1.5ex]{\makecell{KV Cache \\ Compression}} &
\makecell{VBench\\ $\uparrow$} &
\makecell{LPIPS\\ $\downarrow$} &
\makecell{SSIM\\ $\uparrow$} &
\makecell{PSNR\\ $\uparrow$} \\
\midrule  

\multirow{4}{*}{MAGI-1}  & - & - & $77.06\%$ & - & - & -  \\
&  TeaCache & - & $70.11\%$ & $0.8160$ & $0.1138$ & $8.94$ \\
&  ChunkWise  & - & $77.66\%$ & \boldmath{$0.4208$} & \boldmath{$0.5226$} & \boldmath{$19.89$} \\
&  ChunkWise  & Enabled & \boldmath{$77.93\%$} & $0.4311$ & $0.5140$ & $19.27$ \\

\midrule  
\multirow{5}{*}{SkyReels-V2}  &  - & -  & $83.34\%$ & - & - & -  \\
& TeaCache  & - & $80.06\%$ & $0.3063$ & $0.6121$ & $18.39$ \\
& ChunkWise  & - & \boldmath{$83.12\%$} & \boldmath{$0.1225$} & \boldmath{$0.789$} & \boldmath{$23.74$} \\
& ChunkWise  & Enabled & $83.01\%$ & $0.1565$ & $0.7425$ & $22.61$ \\

\bottomrule 
\end{tabular}
}

\label{tab:table2}
\end{table*}

\section{Conclusion}
In this work, we have presented FlowCache, a novel and efficient framework specifically designed to accelerate autoregressive video generation models. Our approach is grounded in a key empirical and theoretical insight: the denoising trajectories of individual video chunks are highly heterogeneous, exhibiting distinct levels of temporal similarity at any given timestep. 
To address this, FlowCache introduces a chunkwise adaptive caching strategy that independently manages the reuse policy for each video chunk based on its unique denoising state. Furthermore, to tackle the substantial memory overhead from the growing Key-Value (KV) cache—a mechanism crucial for maintaining long-range temporal consistency—we propose a dedicated KV cache compression method. 
Extensive experiments on representative autoregressive models, MAGI-1 and SkyReels-V2, demonstrate the effectiveness of our approach. These results establish FlowCache as a new state-of-the-art for efficient, training-free acceleration, effectively bridging the gap between the theoretical promise of autoregressive video models and the practical feasibility of real-time, ultra-long video synthesis.

\subsubsection*{Acknowledgments}
This work is supported by the National Key Research and Development Program of China (No. 2025YFE0113500), the National Science Fund for Distinguished Young Scholars (No.
62025603), the Fundamental Research Funds for the Central Universities, the National Natural Sci-
ence Foundation of China (No. 62576299, No. U21B2037, No. U22B2051, No. U23A20383, No.
62176222, No. 62176223, No. 62176226, No. 62072386, No. 62072387, No. 62072389, No.
62002305, No. 62272401), and the Natural Science Foundation of Fujian Province of China (No.
2021J06003, No. 2022J06001).




\bibliography{iclr2026_conference}
\bibliographystyle{iclr2026_conference}

\appendix
\section{LLM Usage Statement}
\label{llm usage statement}
Large Language Models (LLMs) were used to aid in the writing and polishing of the manuscript. Specifically, we used an LLM to assist in refining the language, improving readability, and ensuring clarity in various sections of the paper. The model helped with tasks such as sentence rephrasing, grammar checking, and enhancing the overall flow of the text.

It is important to note that the LLM was not involved in the ideation, research methodology, or experimental design. All research concepts, ideas, and analyses were developed and conducted by the authors. The contributions of the LLM were solely focused on improving the linguistic quality of the paper, with no involvement in the scientific content or data analysis.

The authors take full responsibility for the content of the manuscript, including any text generated or polished by the LLM. We have ensured that the LLM-generated text adheres to ethical guidelines and does not contribute to plagiarism or scientific misconduct.

\setcounter{theorem}{0}
\setcounter{corollary}{0}

\section{Proof of Theorem~\ref{the:l1_distance_increase_simplified}}
\label{Proof of Theorem}

\begin{theorem}
\label{the:l1_distance_increase_simplified_appendix}
Assume the diffusion model $v_{\theta}$ has converged to the optimal velocity field of the flow matching objective. Further, assume the scheduling function is of the power-law form $\sigma(t) = (t/T)^p$ for some constant power $p>0$ and total time $T$.

Given $0 < t_{1} < t_{2}\leq T$ and a data chunk $X^i$ whose initial noise state $X_T^i$ is drawn from $\mathcal{N}(0,I)$, the following inequality holds:
\begin{equation}
\label{eq:theorem1_simplified_appendix}
L1_{rel}(X, t_1, i) \geq L1_{rel}(X, t_2, i)
\end{equation}
\end{theorem}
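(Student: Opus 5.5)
Under the optimality assumption, the trajectory from the fixed noise sample $X_T^i=\epsilon$ is the straight interpolation $X_t^i=(1-\sigma(t))\,x_0+\sigma(t)\,\epsilon$, where $x_0$ is the clean endpoint. The theorem is really a statement about this idealized pathwise picture, and the plan is to reduce both sides to explicit functions of $\sigma$ and then prove a monotonicity in $t$.

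\textbf{Step 1: the update.} On the straight path the velocity is $v=dX_t/dt=\sigma'(t)(\epsilon-x_0)$. The Euler update over one step of length $\Delta t$ (a fixed grid spacing) is then
\[
\|v\,\Delta t\|_1=\sigma'(t)\,\Delta t\,\|\epsilon-x_0\|_1=\tfrac{p}{T}\,(t/T)^{p-1}\,\Delta t\,D,
\]
with $D=\|\epsilon-x_0\|_1$. An equivalent view is the difference of $\sigma$ over one step, $\sigma(t)-\sigma(t-\Delta t)$, times $D$.

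\textbf{Step 2: the ratio.} Dividing by $\|X_t\|_1=\|x_0+\sigma(t)(\epsilon-x_0)\|_1$ gives
\[
L1_{rel}(X,t,i)=\frac{\Delta\sigma(t)\,D}{\|x_0+\sigma(t)(\epsilon-x_0)\|_1}.
\]
Write $s=\sigma(t)\in(0,1]$. The denominator is $N(s)=\|x_0+s(\epsilon-x_0)\|_1$, which is convex in $s$, with $N(1)=\|\epsilon\|_1$ and $N(0)=\|x_0\|_1$.

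\textbf{Step 3: the denominator.} Near the noise end, $\|X_t\|_1$ is dominated by the Gaussian component. I would argue that $N(s)$ is nondecreasing in $s$ on the relevant range, so the denominator shrinks as $t$ decreases. Because $x_0$ and $\epsilon$ are independent and the dimension is high, one expects $\|X_t\|\approx\sqrt{(1-s)^2\|x_0\|^2+s^2\|\epsilon\|^2}$. This is the variance-preserving-type heuristic that the clean latent has smaller per-coordinate magnitude than unit Gaussian noise, in which case $N(s)$ grows with $s$ near $1$. I would need to state this as an explicit condition, for instance $\|x_0\|_1\le\|\epsilon\|_1$ together with convexity of $N$ and $N'(1)\ge0$. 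Convexity and $N'(1)\ge0$ alone do not make $N$ monotone on all of $(0,1]$: the minimum could fall in the interior, so those two facts by themselves do not cover the full range.

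\textbf{Step 4: the numerator, and the main obstacle.} Here the argument depends on the value of $p$.

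For $p\le1$, $\Delta\sigma$ is nonincreasing in $t$ because $\sigma$ is concave. The numerator then also grows as $t$ decreases, and the ratio comparison $L1_{rel}(t_1)\ge L1_{rel}(t_2)$ follows at once.

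For $p>1$ this is the hard step. As $t\to0$ the numerator vanishes like $t^{p-1}$ while the denominator tends to $\|x_0\|_1>0$, so the inequality cannot hold near $t=0$ without further assumptions. I would try to handle it in one of two ways.

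1. Reparametrize the sampling grid so that the steps are uniform in $\sigma$, as is common for shifted schedules. Then $\Delta\sigma$ is constant and only the denominator argument is needed.

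2. Alternatively, restrict to the regime of $t_1,t_2$ where the decrease of $N$ dominates. This amounts to requiring that $\log\Delta\sigma-\log N(\sigma)$ be decreasing in $t$, checked by differentiating.

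Honestly, I expect the theorem as stated to require one of these additional conditions. The main effort goes into pinning down and justifying the denominator monotonicity in Step 3 and the step-size convention in Step 4.
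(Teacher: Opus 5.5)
Your reduction is the same as the paper's. Its $A(t)B(t)\,\Delta t=\frac{p}{t}\cdot\frac{\sigma(t)\,\|X_{noise}^i-X_0^i\|_1}{\|X_t^i\|_1}\,\Delta t$ is exactly your $\sigma'(t)\,\Delta t\,D/N(\sigma(t))$. Both of you also use the pathwise velocity $\sigma'(t)(\epsilon-x_0)$, which gives the optimal field's trajectory only when the data distribution is a point mass.

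From there the paper assumes the opposite denominator direction from yours: $\|X_0^i\|_1>\|X_{noise}^i\|_1$, so $\|X_t^i\|_1$ decreases in $t$. It then closes by asserting $B'/B\le 1/t$ on the strength of a ``saturating growth'' heuristic.

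Your $p>1$ diagnosis is correct and is exactly where that step breaks. Near $t=0$, $B(t)\approx (t/T)^pD/\|X_0^i\|_1$, so $B'/B\approx p/t>1/t$. Equivalently, $L1_{rel}\to0$ as $t\to0$ while $L1_{rel}(X,T,i)>0$. For point-mass data your straight-line picture is exact under the true optimal field, so this is a real counterexample to the statement, not an artifact of the idealization. You are right not to try to prove the case $p>1$.

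Your choice of denominator direction is also the one that matters. Under the paper's own assumption one gets $B'/B\ge p/t$, so its closing inequality cannot hold for any $p>1$, and at $p=1$ it yields the reverse inequality.

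The genuine gap in your proposal is Step 3, on which your $p\le1$ case rests entirely. Because $N$ is convex, being nondecreasing on $(0,1]$ is equivalent to $N'(0^+)\ge0$, and that generically fails:
\begin{itemize}
\item Your own independence heuristic gives slope $\approx-\|x_0\|$ at $s=0$ and a minimum near $s^*=\|x_0\|_2^2/(\|x_0\|_2^2+\|\epsilon\|_2^2)$.
\item In $L^1$, for dense $x_0$, $N'(0^+)=\sum_k\mathrm{sign}(x_{0,k})\,\epsilon_k-\|x_0\|_1$, which is negative with overwhelming probability.
\item The extra condition $\|x_0\|_1\le\|\epsilon\|_1$ you suggest does not remove this dip.
\end{itemize}
Consequently, at $p=1$, where $L1_{rel}\propto 1/N(t/T)$, the claimed inequality actually reverses for $t_1<t_2<s^*T$. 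Your Fix 1 (steps uniform in $\sigma$) has the same problem, since it also reduces to monotonicity of $N$.

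The hypothesis that actually closes the argument comes from
$\frac{d}{dt}\log L1_{rel}=\frac1t\bigl[(p-1)-p\,\sigma N'(\sigma)/N(\sigma)\bigr]$.
The inequality therefore holds on a $t$-interval if and only if $\sigma N'(\sigma)/N(\sigma)\ge 1-\frac1p$ there. This is precisely the paper's $B'/B\le1/t$ rewritten, and it is the explicit form of your Fix 2. What it gives in each regime:
\begin{itemize}
\item For $p>1$ it can never hold near $t=0$, because the left side tends to $0$ as $\sigma\to0$ when $x_0\neq0$.
\item At $p=1$ it requires $N'(0^+)\ge0$.
\item For $p<1$ it holds on all of $(0,T]$ only if the elasticity $\sigma N'/N$ never drops below $-(\frac1p-1)$, i.e.\ the dip of $N$ is shallow enough.
\end{itemize}
State this condition as an assumption and your proof becomes a complete two-line argument. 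Without it, neither your Step 4 nor the paper's combined analysis establishes the theorem.
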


\begin{proof}
The relative L1 distance is defined as $L1_{rel}(X,t,i) = \frac{{\|v_{\theta}(X_{t}^{i}, t, c) \cdot \Delta t\|}_1}{ {\| X_{t}^{i} \|}_1}$. To prove the theorem, we will analyze the monotonicity of this function with respect to time $t$.

First, according to the theorem's assumption that the model is optimal, we can replace $v_{\theta}$ with its ideal form in the flow matching framework:
\begin{equation*}
v_{\theta}(X_{t}^{i}, t, c) = -\frac{\sigma'(t)}{\sigma(t)}(X_{t}^{i} - X_{0}^{i})
\end{equation*}
Substituting this into the $L1_{rel}$ expression gives:
\begin{align*}
L1_{rel}(X,t,i) &= \frac{{\|-\frac{\sigma'(t)}{\sigma(t)}(X_{t}^{i} - X_{0}^{i}) \cdot \Delta t\|}_1}{ {\| X_{t}^{i} \|}_1} \\
&= \left| \frac{\sigma'(t)}{\sigma(t)} \right| \cdot \frac{{\|X_{t}^{i} - X_{0}^{i}\|}_1}{ {\| X_{t}^{i} \|}_1} \cdot |\Delta t|
\end{align*}
Since $\sigma(t)$ is monotonically increasing for $t>0$, both $\sigma'(t)$ and $\sigma(t)$ are positive, as is $\Delta t$. We can therefore remove the absolute value signs. Let's analyze the expression as a product of two time-dependent functions, $A(t)$ and $B(t)$:
\begin{equation*}
L1_{rel}(X,t,i) = \underbrace{\left( \frac{\sigma'(t)}{\sigma(t)} \right)}_{A(t)} \cdot \underbrace{\left( \frac{{\|X_{t}^{i} - X_{0}^{i}\|}_1}{ {\| X_{t}^{i} \|}_1} \right)}_{B(t)} \cdot \Delta t
\end{equation*}

Next, we analyze the behavior of $A(t)$ and $B(t)$ individually.

\textbf{Analysis of $A(t)$:}
According to the theorem's assumption, $\sigma(t) = (t/T)^p$. Its derivative is $\sigma'(t) = \frac{p}{T}(\frac{t}{T})^{p-1} = \frac{p \cdot t^{p-1}}{T^p}$.
Therefore, $A(t)$ becomes:
\begin{equation*}
A(t) = \frac{\sigma'(t)}{\sigma(t)} = \frac{p \cdot t^{p-1}/T^p}{t^p/T^p} = \frac{p}{t}
\end{equation*}
For $t \in (0, T]$, the function $A(t) = p/t$ is a strictly monotonically decreasing function.

\textbf{Analysis of $B(t)$:}
We expand the norms in $B(t)$ using the forward process definition $X_t^i = (1-\sigma(t))X_0^i + \sigma(t)X_{noise}^i$.
The numerator term becomes:
\begin{equation*}
\|X_{t}^{i} - X_{0}^{i}\|_1 = \|\sigma(t)(X_{noise}^i - X_0^i)\|_1 = \sigma(t) \|X_{noise}^i - X_0^i\|_1
\end{equation*}
Since $\sigma(t)$ is monotonically increasing with $t$, the numerator is a monotonically increasing function of $t$.

The denominator is $\|X_t^i\|_1 = \|(1-\sigma(t))X_0^i + \sigma(t)X_{noise}^i\|_1$. In high-dimensional latent spaces representing natural data (like video chunks), the structured data $X_0^i$ generally possesses a higher L1 norm than an unstructured Gaussian noise vector $X_{noise}^i$ of the same dimension. The interpolation $X_t^i$ represents a smooth transition from the data manifold to the noise distribution. Consequently, it is a standard observation that the L1 norm $\|X_t^i\|_1$ is a monotonically decreasing function as $t$ goes from $0$ to $T$.

Thus, $B(t)$ is the ratio of a monotonically increasing function (numerator) and a monotonically decreasing function (denominator). This implies that $B(t)$ itself is a monotonically increasing function of $t$.

\textbf{Combined Analysis:}
We need to determine the behavior of the product $L1_{rel} \propto A(t) \cdot B(t)$. We have a strictly decreasing function $A(t) \sim 1/t$ and an increasing function $B(t)$. To show that their product is decreasing, we must argue that the decay of $A(t)$ dominates the growth of $B(t)$.

Let's consider the derivative of their product $f(t) = A(t)B(t)$:
\begin{equation*}
f'(t) = A'(t)B(t) + A(t)B'(t)
\end{equation*}
Substituting $A(t) = p/t$ and $A'(t) = -p/t^2$, the condition becomes:
\begin{equation*}
f'(t) \leq 0 \quad \iff \quad \frac{p}{t}B'(t) \leq \frac{p}{t^2}B(t) \quad \iff \quad \frac{B'(t)}{B(t)} \leq \frac{1}{t}
\end{equation*}
This inequality, stating that the relative growth rate of $B(t)$ is less than $1/t$, holds for a wide range of functions that exhibit sub-linear or saturating growth. Given that $B(t)$ starts from $B(0)=0$ and increases towards a finite constant $B(T)$, its growth rate $B'(t)$ diminishes over time, while the term $1/t$ decays. The strong hyperbolic decay factor $A(t)=p/t$ from the scheduler's design effectively dominates the milder, saturating growth of the norm ratio $B(t)$. This ensures that their product is a monotonically decreasing function for $t \in (0, T]$.

Since $L1_{rel}(X,t,i)$ is proportional to this monotonically decreasing product, for any $0 < t_1 < t_2 \leq T$, we have:
\begin{equation*}
L1_{rel}(X, t_1, i) \geq L1_{rel}(X, t_2, i)
\end{equation*}
This completes the proof. 
\begin{flushright} 
$\square$
\end{flushright}

\end{proof}

\section{Proof of Corollary~\ref{coro:l1_distance_diff_simplified}}
\label{Proof of Corollary}

\begin{corollary}[Cross-Chunk Divergence of Relative L1 Distance]
\label{coro:l1_distance_diff_simplified_appendix}
Assuming that distinct video chunks $i \neq j$ represent heterogeneous content such that their state norms differ at any intermediate timestep $t \in (0, T)$ (i.e., $\|X_t^i\|_1 \neq \|X_t^j\|_1$), and that the model's update magnitude $\|v_{\theta}(X_{t}^{k}, t, c) \cdot \Delta t\|_1$ is approximately invariant across chunks for a fixed $t$, then their relative L1 distances are unequal:
\begin{equation}
L1_{rel}(X, t, i) \neq L1_{rel}(X, t, j).
\end{equation}
\end{corollary}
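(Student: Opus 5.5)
The argument is direct: write both relative L1 distances over a common numerator and compare their denominators. Fix an intermediate timestep $t \in (0,T)$ and two distinct chunks $i \neq j$. By the expression in Equation~\ref{eq:ar_relative_L1_distance},
\begin{equation*}
L1_{rel}(X,t,k) = \frac{N_k(t)}{\|X_t^k\|_1}, \qquad N_k(t) := \|v_{\theta}(X_t^k,t,c)\cdot \Delta t\|_1, \quad k \in \{i,j\}.
\end{equation*}
I will treat the invariance hypothesis as exact equality, $N_i(t) = N_j(t) =: N(t)$. I also need $N(t) > 0$, which I justify as in the proof of Theorem~\ref{the:l1_distance_increase_simplified}. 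Under the optimal-velocity form $v = -\frac{\sigma'(t)}{\sigma(t)}(X_t - X_0)$, we have $N(t) = \frac{\sigma'(t)}{\sigma(t)}\,\sigma(t)\,\|X_{noise}-X_0\|_1\,\Delta t$. This is strictly positive for $t \in (0,T)$ whenever the noise sample differs from the data, which happens almost surely because $X_T \sim \mathcal{N}(0,I)$. The denominators are positive for the same almost-sure reason, since $X_t$ is a nondegenerate Gaussian-perturbed vector.

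With a common positive numerator, the map $x \mapsto N(t)/x$ is strictly decreasing on $(0,\infty)$, and hence injective. The hypothesis $\|X_t^i\|_1 \neq \|X_t^j\|_1$ then gives $N(t)/\|X_t^i\|_1 \neq N(t)/\|X_t^j\|_1$ immediately, which is the claim. The inequality also has a definite direction: the chunk with the smaller state norm has the larger relative L1 distance. This connects back to the discussion after Theorem~\ref{the:l1_distance_increase_simplified} about chunks at different denoising stages.

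The main obstacle is the word ``approximately'' in the invariance hypothesis, since with only approximate equality the conclusion could fail. To handle it, I would state a quantitative version. Suppose $|N_i(t) - N_j(t)| \leq \delta$. Then
\begin{equation*}
\left| L1_{rel}(X,t,i) - L1_{rel}(X,t,j)\right| \;\geq\; N_j(t)\left|\frac{1}{\|X_t^i\|_1} - \frac{1}{\|X_t^j\|_1}\right| - \frac{\delta}{\|X_t^i\|_1},
\end{equation*}
which follows by adding and subtracting $N_j(t)/\|X_t^i\|_1$. The right-hand side is strictly positive as long as the invariance error $\delta$ is small relative to the norm gap, specifically when
\begin{equation*}
\delta < N_j(t)\,\frac{\big|\|X_t^j\|_1-\|X_t^i\|_1\big|}{\|X_t^j\|_1}.
\end{equation*}
I would present the exact-equality case as the proof and note this bound as the precise meaning of ``approximately invariant.''
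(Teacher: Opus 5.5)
Your argument is correct and is essentially the paper's. The paper forms the ratio $L1_{rel}(X,t,i)/L1_{rel}(X,t,j)$, treats the numerator ratio as $\approx 1$, and concludes the ratio is $\approx \|X_t^j\|_1/\|X_t^i\|_1 \neq 1$; your injectivity of $x \mapsto N(t)/x$ is the same step. Your additions tighten the paper's argument rather than change it. The ratio step silently needs $N_j(t)>0$, which you state explicitly (borrowing Theorem~1's optimal-velocity assumption, which the corollary leaves implicit). Your bound $\delta < N_j(t)\,\bigl|\|X_t^j\|_1-\|X_t^i\|_1\bigr|/\|X_t^j\|_1$ supplies the justification the paper omits when it passes from an approximate equality to a strict inequality.
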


\begin{proof}
Our objective is to prove that $L1_{rel}(X, t, i) \neq L1_{rel}(X, t, j)$ for $i \neq j$ under the corollary's assumptions. We begin by recalling the definition from Equation~\ref{eq:ar_relative_L1_distance} for two distinct chunks, $i$ and $j$:
\begin{equation}
L1_{rel}(X, t, i) = \frac{{\|v_{\theta}(X_{t}^{i}, t, c) \cdot \Delta t\|}_1}{ {\| X_{t}^{i} \|}_1}, \quad L1_{rel}(X, t, j) = \frac{{\|v_{\theta}(X_{t}^{j}, t, c) \cdot \Delta t\|}_1}{ {\| X_{t}^{j} \|}_1}.
\end{equation}
We analyze the ratio of these two quantities:
\begin{equation}
\label{eq:proof_ratio}
\frac{L1_{rel}(X, t, i)}{L1_{rel}(X, t, j)} = \frac{{\|v_{\theta}(X_{t}^{i}, t, c) \cdot \Delta t\|}_1}{{\|v_{\theta}(X_{t}^{j}, t, c) \cdot \Delta t\|}_1} \cdot \frac{{\| X_{t}^{j} \|}_1}{{\| X_{t}^{i} \|}_1}.
\end{equation}
First, we consider the numerators of the original expressions. By the assumption of approximately constant update magnitude, the model's output norm is largely determined by the timestep $t$ rather than the specific chunk content. This is because conditioning information from prior chunks, processed through mechanisms like attention with Softmax normalization, primarily steers the direction of the update vector $v_\theta$ rather than its overall magnitude. Thus, we can posit that for a fixed $t$:
\begin{equation}
\label{eq:proof_numerator_approx}
{\|v_{\theta}(X_{t}^{i}, t, c) \cdot \Delta t\|}_1 \approx {\|v_{\theta}(X_{t}^{j}, t, c) \cdot \Delta t\|}_1.
\end{equation}
Therefore, the first term in Equation~\ref{eq:proof_ratio} is approximately equal to 1.

Next, we examine the denominators. The assumption of content heterogeneity implies that the target data distributions for each chunk, $X_0^i$ and $X_0^j$, are different. Since the flow matching process $x_t = (1-\sigma(t))x_0 + \sigma(t)x_T$ defines a unique trajectory from the initial noise $x_T$ to the final data $x_0$, the distinctness of $X_0^i$ and $X_0^j$ guarantees that their entire denoising paths are different. Consequently, at any intermediate time $t \in (0, T)$, their states are unequal, leading to different L1 norms:
\begin{equation}
\label{eq:proof_denominator_unequal}
{\| X_{t}^{i} \|}_1 \neq {\| X_{t}^{j} \|}_1.
\end{equation}
Substituting the approximation from Equation~\ref{eq:proof_numerator_approx} into Equation~\ref{eq:proof_ratio}, we get:
\begin{equation}
\frac{L1_{rel}(X, t, i)}{L1_{rel}(X, t, j)} \approx \frac{{\| X_{t}^{j} \|}_1}{{\| X_{t}^{i} \|}_1}.
\end{equation}
Given the inequality in Equation~\ref{eq:proof_denominator_unequal}, the right-hand side of this approximation cannot be equal to 1. It follows directly that $L1_{rel}(X, t, i) \neq L1_{rel}(X, t, j)$. This demonstrates that the relative L1 distance metric inherently diverges across chunks at a fixed timestep, reflecting their distinct underlying content and corresponding states along the denoising trajectory.

\begin{flushright} 
$\square$
\end{flushright}
\end{proof}

\section{Implementation and Efficiency Details}

\subsection{Implementation Details of KV Cache Compression}
\label{implementation kvcache}

KV Cache compression begins only after the cache budget is reached. The process operates in two phases:
\begin{enumerate}
    \item A \textbf{Cache Filling Phase: } New KV cache entries are added to the cache until the budget is reached, with no compression occurring during this phase.
    \item A \textbf{Compression Phase:} Once the cache is full, the compression mechanism is triggered upon each new KV cache entry arrival. Our method then compresses existing entries within the budget to accommodate the new entry.
\end{enumerate}
While KV cache compression is triggered upon reaching the specified budget, FlowCache also incorporates cache reuse for input video chunk activations, which applies to all video chunks throughout the generation pipeline. To assess the combined speedup, we measure end-to-end acceleration.

\subsection{Efficiency Analysis of KV Cache Compression and feature reuse}
\label{efficiency analysis kvcache}
The importance and similarity judgments during KV cache compression, as well as the input cache retention in FlowCache, may raise questions about increased memory usage during inference. However, our theoretical analysis and experimental results confirm that both types of additional overhead are negligible.

According to Equations ~\ref{eq:key_important_score} and ~\ref{eq:key_redundancy_score} in the paper, KV cache compression introduces additional online dynamic overhead for computing KV cache importance. For Equation ~\ref{eq:key_important_score}, we use only last 50 query tokens to compute attention scores following previous work MInference ~\citep{jiang2024minference} and Flexprefill ~\citep{lai2025flexprefill}, allowing us to identify important elements in the KV cache without significant additional memory or computational overhead. For Equation ~\ref{eq:key_redundancy_score}, we derive an equivalent form that substantially reduces both additional memory (17.79 GB → 0.30 GB) and computational overhead (10.26s → 0.077s) by exchanging the order of matrix multiplication and summation:

\begin{equation}
\frac{1}{L_k} \sum_{i=1}^{L_k} S^{(h)}_{ij} = 
\underbrace{\frac{1}{L_k} \sum_{i=1}^{L_k} (K_{\text{clean}}^{(h)} {K_{\text{clean}}^{(h)}}^T)_{ij}}_{\text{Naive: 17.79 GB, 10.26s}} 
= \underbrace{\left(\frac{1}{L_k} \sum_{i=1}^{L_k} (K_{\text{clean}}^{(h)})_{ij}\right) {K_{\text{clean}}^{(h)}}^T}_{\text{Optimized: 0.30 GB, 0.077s}},
\end{equation}

where $K_{\text{clean}}^{(h)} \in \mathbb{R}^{L_k \times d}$. In the formula, memory and time costs are evaluated at $L_k=24{,}300$ and $d=128$ for illustrative purposes. Through this mathematical transformation, the memory and time overhead of our online dynamic KV cache importance computation becomes negligible.

For the input cache, given $X \in \mathbb{R}^{L_q \times \text{hidden\_states}}$, the caching cost at different timesteps is fixed. When $L_q = 108000$ and $\text{hidden\_states}=3072$, the corresponding memory footprint is only 0.6 GB, which is negligible in the overall inference overhead.

Regarding computational overhead, we profiled the FLOPs (floating-point operations) of the operation for selecting and retaining tokens in the KV cache. When generating a 10-second, 720×720 text-to-video sequence and with a KV cache budget of 5 chunks, this operation is performed only 5 times throughout the entire generation process. The actual computational overhead of this operation is 0.0013 PFLOPs with token-level query granularity and 0.05 PFLOPs with frame-level query granularity, respectively—both negligible compared to the total FLOPs of FlowCache-fast (140 PFLOPs).

The additional memory footprint and computational overhead introduced by FlowCache are therefore negligible. Memory usage experimental results are shown in Table ~\ref{tab:table6} .

\section{Ablation Study Details}
\label{Ablation Study Details}


\subsection{Application on the 16-step distilled model.}
\label{ablation 16step-distill}
Specifically, the VBench metrics we selected are: Imaging Quality, Aesthetic Quality, Motion Smoothness, Dynamic Degree, Background Consistency, Subject Consistency, Scene, Overall Consistency. Our evaluation consists of two parts: (1) benchmarking the 16-step distilled baseline against FlowCache-fast applied to the 64-step model, and (2) comparing TeaCache and FlowCache when applied to the 16-step distilled version. Key findings include:

\textbf{Distillation vs. Acceleration Trade-offs.} The 16-step distilled baseline achieves faster inference (2.38× vs. 3.56× speedup) but compromises quality compared to FlowCache-fast on the 64-step model (71.69\% vs. 70.26\%).

\textbf{Complementary Benefits.} FlowCache applied to the distilled model achieves superior speedup over TeaCache (1.92× vs. 1.17×) while preserving generation quality (70.72\% vs. 70.60\%).

These results in Table~\ref{tab:table3} demonstrate that FlowCache complements distillation techniques, offering orthogonal performance gains. Notably, FlowCache requires no large-scale distillation training of autoregressive models and operates as a plug-and-play online method.

\subsection{Performance under Different KV Cache Compression Settings.}
\label{ablation study2}

\subsubsection{Compression Granularity}
\label{ablation compression granularity}
Experimental results are summarized in Table~\ref{tab:table4}. Under the baseline configuration without any acceleration or compression strategy, MAGI-1 achieves an accuracy of 47.60\% on Physics-IQ. However, applying a non-discriminative global reuse strategy (TeaCache) causes a drastic performance drop to 13.69\%, indicating that coarse-grained reuse severely degrades model performance. In contrast, the ChunkWise reuse strategy maintains a substantially higher accuracy of 43.10\%, significantly outperforming TeaCache.

Our experiments further reveal that, when the query granularity is fixed at the token level (specifically, we select the last 50 tokens of the last denoising chunk as the query), the choice of key granularity has a pronounced impact on performance. Preserving keys at the token level—rather than aggregating them into frame- or chunk-level representations—enables finer-grained matching with historical context and yields higher accuracy (39.34\% vs. 38.62\% and 38.99\%, respectively). This finding underscores the importance of maintaining fine-grained key representations during KV cache compression to retain historically relevant information critical for accurate prediction.

Moreover, when key granularity is fixed at the token level, using frame-level queries achieves higher accuracy (39.53\%) compared to token-level queries (39.34\%). This suggests that retrieving historical KV pairs at the semantically richer frame level facilitates more effective aggregation of context relevant to the current prediction, thereby better preserving physical consistency in the generated video. However, the experimental results indicate that this improvement is marginal, while using frame-level queries to retrieve important key-value (KV) cache entries incurs a substantial increase in GPU memory consumption. Therefore, we argue that using token-level queries to retrieve important KV cache entries is sufficient—a design choice that shares conceptual similarities with prior approaches such as Minference~\citep{jiang2024minference} and FlexPrefill~\citep{lai2025flexprefill}.

\subsubsection{Hyperparameter $\lambda$}
\label{ablation lambda}
We conducted an ablation study on the hyperparameter $\lambda$ in Equation~\ref{eq:key_total_score}. Results in Table~\ref{tab:table5} demonstrate that Physics-IQ scores increase as $\lambda$ decreases, then stabilize beyond a threshold value. This trend indicates that redundancy between KV caches (Equation~\ref{eq:key_redundancy_score}) more effectively captures useful information from past video frames than the raw importance metric (Equation~\ref{eq:key_important_score}) alone. Based on these findings, we adopt $\lambda$ =0.07 uniformly across all video types and experiments.

\subsubsection{Different KV Cache budget}
\label{ablation different KV Cache budget}
We have conducted additional experiments to explicitly demonstrate the memory-quality trade-offs. Notably, all experiments exclude cache reuse mechanisms for fair comparison. Specifically, we evaluated Physics-IQ scores across varying peak memory budgets during inference. The results in Table~\ref{tab:table6} show that FlowCache reduces peak memory by up to $33.6\%$ while maintaining stable Physics-IQ scores (±0.1), even slightly outperforming Vanilla at 5 chunks budget. This demonstrates its effectiveness in achieving substantial memory savings without compromising physics reasoning quality.
These findings provide concrete evidence of the trade-off between memory efficiency and generation quality, strengthening our contribution as claimed.

\begin{table*}[t!]
\small
\centering
\caption{Quantitative evaluation of inference efficiency and visual quality between 16-step and 64-step in MAGI-1. (VBench\textsuperscript{*} refers to the weighted sum of the scores across the eight representative metrics we selected.)}
\setlength{\tabcolsep}{1mm}{
\begin{tabular}{llccccccc}
\toprule  
\multirow{2}{*}[1.5ex]{\raggedright Model} & 
\multirow{2}{*}[1.5ex]{\raggedright Method} &
\multirow{2}{*}[1.5ex]{\raggedright Steps} &
\makecell{PFLOPs\\ $\downarrow$} &
\makecell{Speedup\\ $\uparrow$} &
\makecell{Latency(s)\\ $\downarrow$} &
\makecell{VBench\textsuperscript{*}\\ $\uparrow$} & \\
\midrule  

\multirow{5}{*}{MAGI-1} & Vanilla & $64$ & $306$  & $1\times$ & $2873$ & $69.12\%$ \\
& FlowCache & $64$ & $140$ & $2.38\times$ & $1209$ & \boldmath{$71.69\%$} \\

& Vanilla & $16$ & $77$  & $3.56\times$ & $808$ & $70.26\%$ \\
& TeaCache & $16$ & $74$  & $4.16\times$ & $691$ & $70.60\%$ \\
& FlowCache & $16$ & $63$ & $6.84\times$ & $420$ & \boldmath{$70.72\%$} \\

\bottomrule 
\end{tabular}
}

\label{tab:table3}
\end{table*}

\begin{table}[H]
\small
\centering
\caption{Ablation study of KV cache compression granularity for MAGI-1 on Physics-IQ Benchmark.}
\setlength{\tabcolsep}{1mm}{
\begin{tabular}{ccccccccc}
\toprule  
\multirow{2}{*}[1.5ex]{\makecell{Reuse \\ Strategy}} &
\multirow{2}{*}[1.5ex]{\makecell{KV Cache \\ Compression}} &
\multirow{2}{*}[1.5ex]{\makecell{Query \\ Granularity}} &
\multirow{2}{*}[1.5ex]{\makecell{Key \\ Granularity}} &
\makecell{Physics-IQ\\ $\uparrow$} \\
\midrule  

- & - & - & - & $47.60\%$ \\
TeaCache & - & - & - & $13.69\%$ \\
ChunkWise & - & - & - & $43.10\%$ \\
ChunkWise& Enabled & Token  & Token & $39.34\%$ \\
ChunkWise& Enabled & Token  & Frame & $38.62\%$ \\
ChunkWise& Enabled & Token  & Chunk & $38.99\%$ \\
ChunkWise& Enabled & Frame  & Token & \boldmath{$39.53\%$} \\

\bottomrule 
\end{tabular}
}

\label{tab:table4}
\end{table}

\begin{table}[H]
\small
\centering
\caption{Ablation study of hyperparameter $\lambda$ for MAGI-1 on Physics-IQ Benchmark.}
\setlength{\tabcolsep}{1mm}{
\begin{tabular}{ccccccccc}
\toprule  
\multirow{2}{*}[1.5ex]{\makecell{$\lambda$}} &
\makecell{Physics-IQ\\ $\uparrow$} \\
\midrule  

 0.03 & $39.38\%$ \\
 0.07 & \boldmath{$39.53\%$} \\
 0.15 & $37.11\%$ \\
 0.20 & $38.42\%$ \\

\bottomrule 
\end{tabular}
}

\label{tab:table5}
\end{table}

\begin{table}[H]
\small
\centering
\caption{Ablation study of different KV cache budget for MAGI-1 on Physics-IQ Benchmark.}
\setlength{\tabcolsep}{1mm}{
\begin{tabular}{ccccccccc}
\toprule  
\multirow{2}{*}[1.5ex]{\makecell{Method}} &
\multirow{2}{*}[1.5ex]{\makecell{KV Cache Budget (Chunks)}} &
\multirow{2}{*}[1.5ex]{\makecell{Peak Memory (GB)}} &
\makecell{Physics-IQ\\ $\uparrow$} \\
\midrule  

 MAGI-1 & 8  & 42.84 & $47.60\%$ \\
 FlowCache w/o cache reuse & 7  & 34.05 & $47.55\%$ \\
 FlowCache w/o cache reuse & 6  & 31.25 & $46.72\%$ \\
 FlowCache w/o cache reuse & 5  & 28.45 & $47.65\%$ \\

\bottomrule 
\end{tabular}
}

\label{tab:table6}
\end{table}

\section{Visualization}
\label{visualization}
We visualize the qualitative results on both MAGI-1 and SkyReels-V2 in Figure~\ref{fig:magi} and Figure~\ref{fig:skyreel}, reporting the specific speedup ratios and VBench scores for each method (Vanilla, TeaCache, FlowCache-slow, and FlowCache-fast). As observed, FlowCache outperforms TeaCache in terms of fine-grained details and overall image quality, avoiding the visible noise artifacts that TeaCache tends to introduce. Moreover, in videos featuring complex scenes or significant motion, both FlowCache variants demonstrate superior robustness compared to TeaCache. Furthermore, while FlowCache-fast achieves higher acceleration, FlowCache-slow preserves higher fidelity, maintaining greater consistency with the original video generation.
\begin{figure}[t]
\begin{center}
\includegraphics[width=1.0\textwidth]{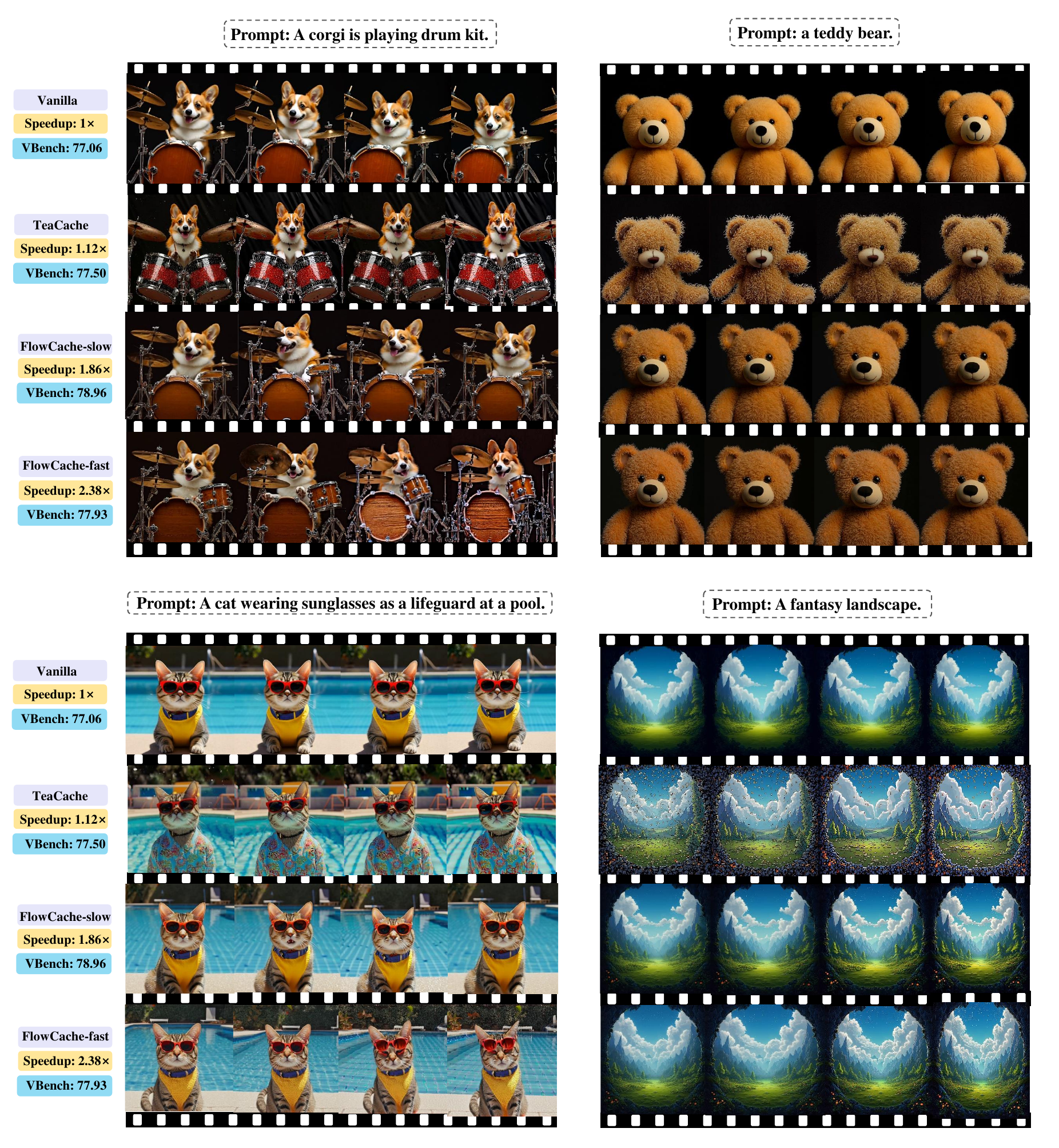}
\end{center}
\caption{Qualitative results of text-to-video generation on MAGI-1. We present TeaCache, FlowCache-slow, FlowCache-fast, and the Vanilla model. The frames are randomly sampled from the generated video.}
\label{fig:magi}
\end{figure}

\begin{figure}[t]
\begin{center}
\includegraphics[width=1.0\textwidth]{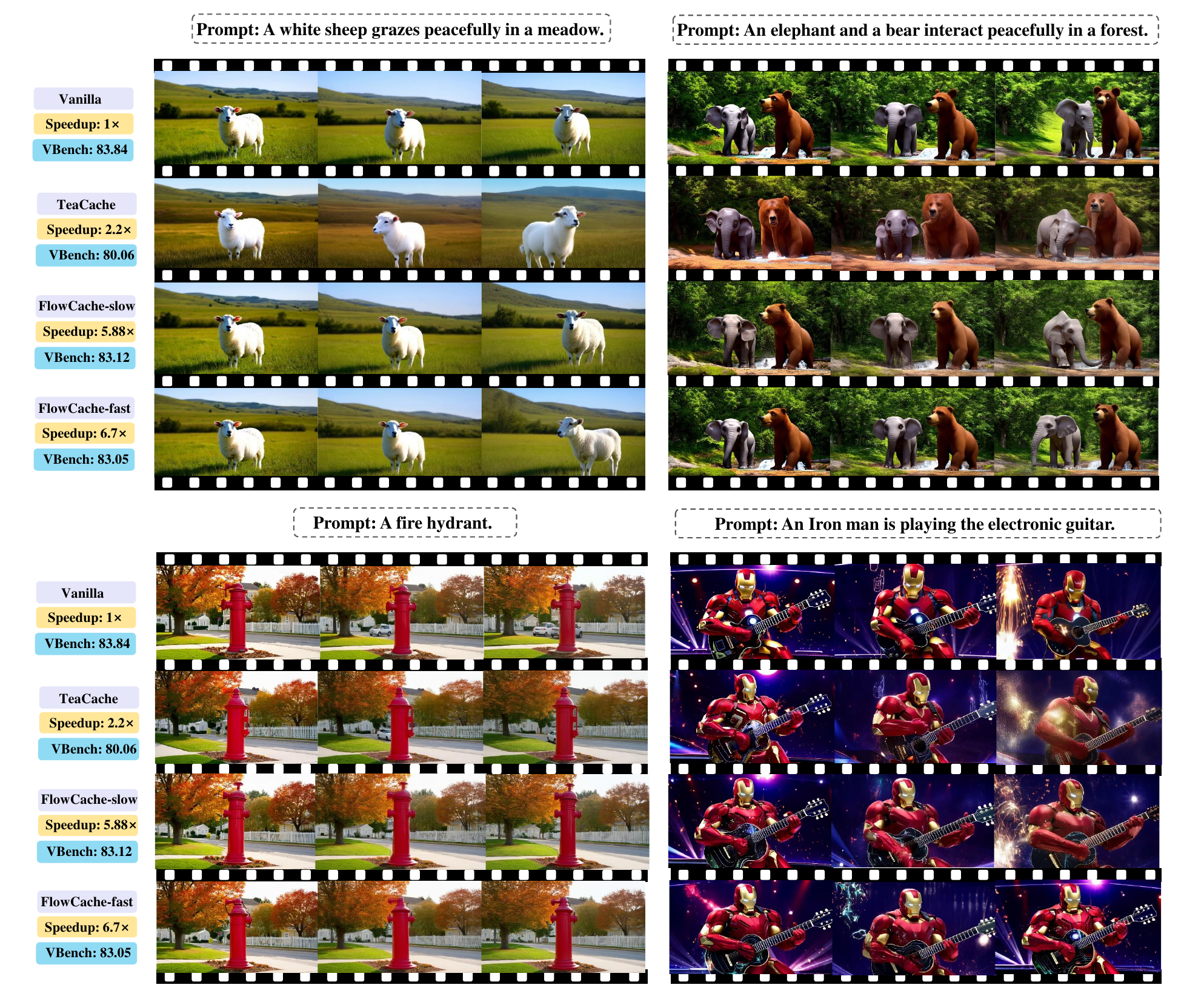}
\end{center}
\caption{Qualitative results of text-to-video generation on SkyReels-V2. We present TeaCache, FlowCache-slow, FlowCache-fast, and the Vanilla model. The frames are randomly sampled from the generated video.}
\label{fig:skyreel}
\end{figure}

\end{document}

%% file: math_commands.tex
\usepackage{amsmath,amsfonts,bm}

\def\eqref#1{equation~\ref{#1}}

\def\1{\bm{1}}

\DeclareMathAlphabet{\mathsfit}{\encodingdefault}{\sfdefault}{m}{sl}
\SetMathAlphabet{\mathsfit}{bold}{\encodingdefault}{\sfdefault}{bx}{n}

